\newtheorem{theorem}{Theorem}
\newtheorem{proof}{Proof}
\title{{Diving into Unified Data-Model Sparsity for Class-Imbalanced Graph Representation Learning}}
\author{
    Chunhui Zhang\textsuperscript{\rm 1}, Chao Huang\textsuperscript{\rm 2}, Yijun Tian\textsuperscript{\rm 3},  Qianlong Wen\textsuperscript{\rm 3}, \\ \textbf{Zhongyu Ouyang\textsuperscript{\rm 3}}, \textbf{Youhuan Li\textsuperscript{\rm 4}}, \textbf{Yanfang Ye\textsuperscript{\rm 3}}, \textbf{Chuxu Zhang\textsuperscript{\rm 1}}
    \\
    \textsuperscript{\rm 1}Brandeis University, USA,
    \textsuperscript{\rm 2}University of Hong Kong, China \\
    \textsuperscript{\rm 3}University of Notre Dame, USA, 
    \textsuperscript{\rm 4}Hunan University, China \\ 
    \texttt{\{chunhuizhang}, \texttt{chuxuzhang\}@brandeis.edu}
    \texttt{chaohuang75@gmail.com}, \\
    \texttt{\{ytian5}, \texttt{qwen}, \texttt{zouyang2},\texttt{yye7\}@nd.edu}, 
    \texttt{liyouhuan@hnu.edu.cn}
}
\begin{document}
\maketitle

\begin{abstract}
Even pruned by the state-of-the-art network compression methods, recent research shows that deep learning model training still suffers from {the demand of massive data usage.} In particular, {Graph Neural Networks (GNNs) training upon such non-Euclidean graph data often encounters relatively higher time costs,} {due to its irregular and nasty density properties,} compared with data in the regular Euclidean space (e.g., image or text).
{Another natural property concomitantly with graph is class-imbalance which cannot be alleviated by the massive graph data while hindering GNNs' generalization.}
To fully tackle
these unpleasant properties, \underline{\textit{(i) theoretically}}, we introduce a hypothesis about what extent a subset {of the training} data can approximate the full {dataset}'s learning effectiveness. {The effectiveness is further} guaranteed {and proved} by the gradients' distance between the subset and the full set; \underline{\textit{(ii) empirically}}, 
{we discover that during the learning process of a GNN, some samples in the training dataset are informative for providing gradients to update model parameters.}
Moreover, the informative subset is not fixed during training process. Samples that are informative in the current training epoch may not be so in the next one.
{We refer to {this} observation {as dynamic} data sparsity.} We also notice that sparse subnets {pruned from a well-trained GNN sometimes forget the information provided by the informative subset, reflected in their poor performances upon the subset.}
{Based on these findings, we develop a unified data-model dynamic sparsity framework named \textit{\textbf{Graph Dec}antation} (GraphDec) to address {challenges brought by} training upon a massive class-imbalanced graph data.
The key idea of GraphDec is to identify {the informative subset dynamically during the training process
by adopting} sparse graph contrastive learning.}
Extensive experiments on multiple benchmark datasets demonstrate that GraphDec outperforms state-of-the-art baselines for {class-imbalanced} graph classification and {class-imbalanced} node classification tasks, {with respect to} classification accuracy and data usage {efficiency.}

\end{abstract}

\vspace{-0.1in}
\section{Introduction}
\vspace{-0.1in}
Graph representation learning (GRL)~\cite{kipf2017semi} has shown remarkable power in dealing with non-Euclidean structure data (e.g., social networks, biochemical molecules, knowledge graphs). Graph neural networks (GNNs)~\cite{kipf2017semi, hamilton2017inductive, gat2018graph,zhang2019hetgnn}, as the current state-of-the-art of GRL, have become essential {in} various graph mining applications. To learn the representation of each node reflecting its local structure pattern, 
{GNNs gather features of the neighbor nodes and} apply message passing along edges.
This topology-aware mechanism enables GNNs to achieve superior performances {over different tasks.}

However, in many real-world scenarios, 
graph data often {preserves two properties}: massiveness~\cite{thakoor2021large, hu2020ogb} and {class-imbalance}~\cite{park2022graphens}.
{Firstly, message-passing over nodes with high degrees brings about heavy computation burdens. Some of the calculations are even redundant, in that not all neighbors are informative for learning task-related embeddings. Unlike regular data such as images or texts, the connectivity of irregular graph data causes random memory access, which further slows down the efficiency of data readout.}
Secondly, class imbalance naturally exists in datasets from diverse practical {domains}, such as bioinformatics and {social} networks. GNNs are {sensitive} to this imbalance and can be biased toward the dominant classes. {This bias may mislead GNNs' learning process, therefore making the model underfit on samples that are of real importance with respect to the downstream tasks, and as a result yielding poor performance on the test data.}

\begin{wrapfigure}{r}{3in}
            \vspace{-0mm}
            \includegraphics[scale=0.6]{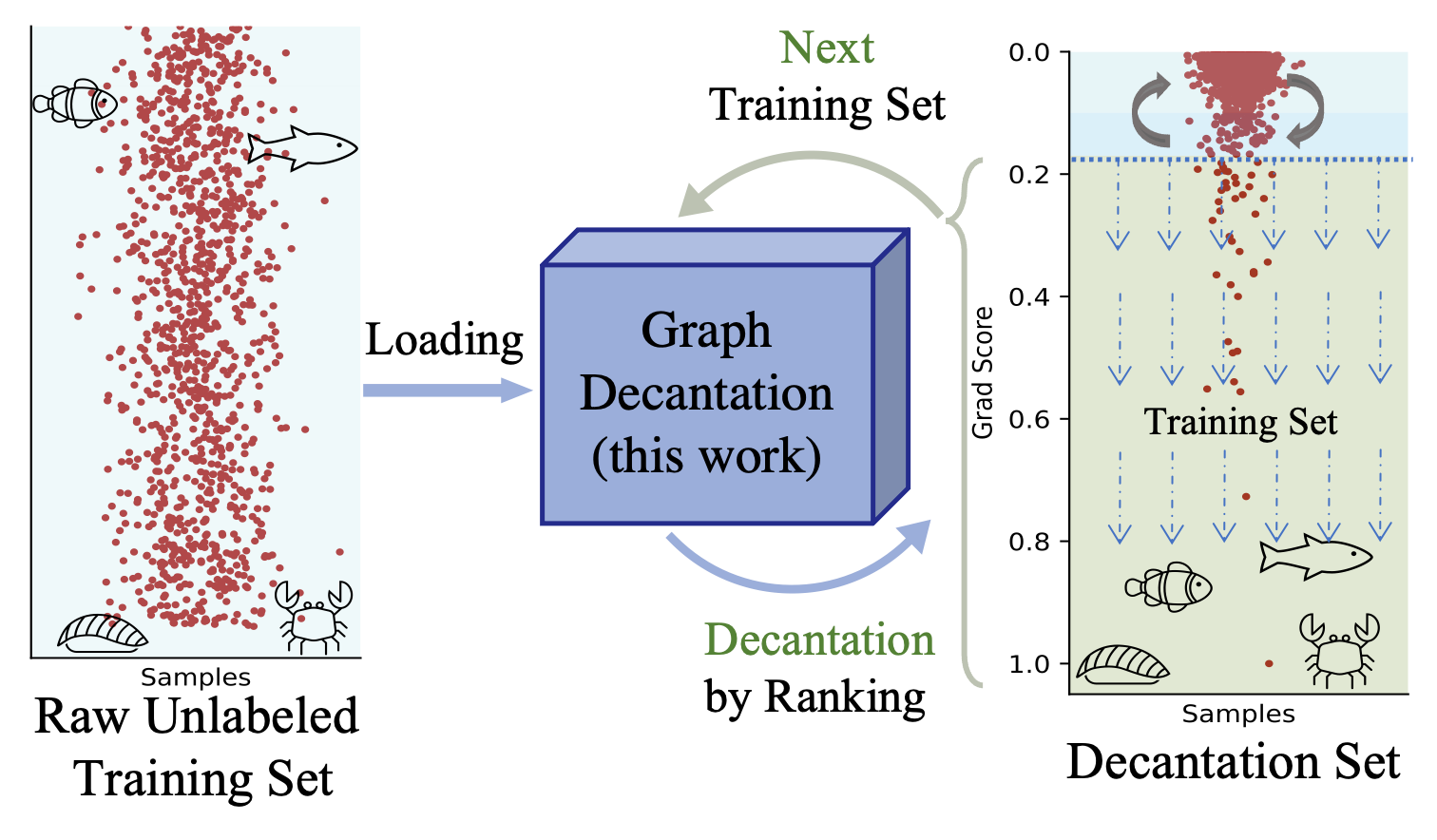}
            \vspace{-3mm}
    \caption{The principle of graph decantation. It decants data samples based on rankings of their gradient scores, and then uses them as the training set in the next epoch.}
            \vspace{-4mm}
\label{fig: decantation_intro}
\end{wrapfigure}
Accordingly, recent studies~\cite{chen2021unified, zhao2021graphsmote, park2022graphens} {arise} to address {the issues of massiveness} or {class-imbalanced} {in} graph data. To {tackle the massiveness issue}, \cite{10.1145/3178876.3186111, chen2018fastgcn} explore efficient data sampling {policies to reduce} the computational cost from {the} data perspective.
From the model improvement perspective, some approaches design the quantization-aware training and low-precision inference method to reduce GNNs' operating costs.
{For example}, GLT~\cite{chen2021unified} applies the lottery ticket technique~\cite{frankle2018the} to simplify graph data and GNN model simultaneously. 
{To deal with the imbalance issue in node classification on graphs, GraphSMOTE~\cite{zhao2021graphsmote} tries to generate new nodes for the minority classes to balance the training data.}
{Improved upon GraphSMOTE, GraphENS~\cite{park2022graphens} further proposes a new augmentation method by constructing an ego network to learn the representations of the minority classes. Despite progress made so far, existing methods fail to tackle the two issues altogether. Furthermore, while one of the issues is being handled, extra computation costs are introduced at the same time.} For example, 
{the} rewind steps in GLT~\cite{chen2021unified} {which search for lottery subnets and subsets heavily increase the computation cost,} although the final {lotteries are} lightweight.
The newly synthetic nodes in GraphSMOTE~\cite{zhao2021graphsmote, chawla2002smote} and GraphENS~\cite{park2022graphens}, {although help alleviate the data imbalance, bring extra computational burdens for the next-coming training process.}
{Regarding the issues above, we make several observations.
Firstly, we notice} that a sparse pruned GNN {easily "forgets" the minority samples when trained with class-imbalanced graph data, as it yields worse performance over the minorities compared with the original GNN~\cite{hooker2019compressed}. To investigate the cause of the above observation, we study how each graph sample affects the model parameters update process by taking a closer look at the gradients it brings to the parameters.}
Specifically, at early training stages, {we found a small subset of the samples providing the most informative supervisory} signals reflected by the gradient norms. {One hypothesis we make is that the training effectiveness of the full training set can be approximated, to some extent, by that of the subset. We further hypothesize that this effective approximation} is guaranteed by the distance between the gradients of the subset and the full dataset.

Based on the above {observations} and the hypothesises, we propose a novel method called \textbf{Graph Dec}antation (GraphDec) to explore dynamic sparsity training from both {model and data aspects}. The principle behind our method is illustrated in Figure \ref{fig: decantation_intro}. 
{Given that informative samples bring about higher gradient values/scores when trained with a sparse GNN,}
{our method is inspired by contrastive self-supervised learning because it can be modified to dynamically prune one branch of contrastive backbone for improving its capability of identifying minority samples in class-imbalanced dataset. In particular, we design a new contrastive backbone with a sparse GNN and enable the model to identify informative samples in a self-supervised manner. To the best of our knowledge, other learning processes (e.g., graph auto-encoder, supervised learning) are either unable to identify informative samples or incapable of learning in a self-supervised manner. Accordingly, the proposed framework can score samples in the current training set and keep only $k$ most informative samples as training set for the next epoch.}
{Considering that a currently unimportant sample does not imply that it will always be unimportant, 
{we further design a data recycling process} to randomly recycle prior discarded data samples {(samples that are considered unimportant in previous training epochs)}, and add them back to current informative sparse subsets for reuse.} {The dynamically} updated informative subset {supports the} sparse GNN  {to learn} more balanced {representations.} To summarize, our major contributions in this work are:
\begin{itemize}[leftmargin=*]
\vspace{-0.05in}
\item We develop a novel framework, Graph Decantation, {which} leverages dynamical sparse graph contrastive learning {on} class-imbalanced graph data with {efficient} data usage. To {our best} knowledge, this is the first study to explore the dynamic sparsity property for {class-imbalanced} graphs.
\vspace{-0.05in}
\item We introduce cosine annealing to dynamically control the sizes of the sparse GNN model and graph data subset{ to smooth} the training process. Meanwhile, we introduce data recycling to refresh the current data subset {to} avoid overfitting.
\vspace{-0.05in}
\item Comprehensive experiments on multiple benchmark datasets demonstrate that GraphDec outperforms state-of-the-art methods for both {class-imbalanced} graph classification and {class-imbalanced} node classification tasks. Additional results show that GraphDec dynamically finds {an} informative subset across different training epochs effectively. 
\end{itemize}

\vspace{-0.2in}
\section{Related Work}
\vspace{-0.1in}
\textbf{Training deep model with sparsity.}
{Despite the fact that deep neural networks work generally well in practice, they are usually over-parameterized. Over-parameterized models, although} usually achieve good performance {when trained properly, are usually associated with enormous computational cost}.
Therefore, parameter pruning {aiming at decreasing computational cost} has been a popular topic and many parameter-pruning strategies {are proposed to} balance the trade-off between model performance and {learning} efficiency \cite{deng2020model, liu2018rethinking}.
Among all of the existing parameter pruning methods, most of them belong to the static pruning {category} and deep neural networks are pruned either by neurons~\cite{han2015learning, han2015deep_compression} or {architectures} (layer and filter)~\cite{he2017channel, dong2017learning}. Parameters deleted by these methods will not be recovered later. 
In contrast to static pruning methods, more recent works propose dynamic pruning strategies {where} different compact subnets will be dynamically activated at each training iteration~\cite{mocanu2018scalable, mostafa2019parameter, raihan2020sparse}. 
The other line of computation cost reduction lies in the dataset sparsity{~\cite{karnin2019discrepancy, mirzasoleiman2020coresets, NEURIPS2021_ac56f8fe}}. The core idea is to prune the original dataset and {filter out} the most salient subset so that an over-parameterized deep model could be trained {upon (e.g.,} {data diet subset~\cite{NEURIPS2021_ac56f8fe}}). 
{Recently, the property of sparsity is also used to improve model robustness~\cite{chen2022sparsity, pmlr-v139-fu21c}.}
In this work, we attempt to {accomplish dynamic sparsity from both {the} GNN model {and the} graph dataset simultaneously.}
\\
\textbf{Class-imbalanced learning on graphs.}
In real-world scenarios, imbalanced class distribution is one of the natural properties in many datasets, including graph data. Except for conventional re-balanced methods, like reweighting samples~\cite{zhao2021graphsmote, park2022graphens} and oversampling~\cite{zhao2021graphsmote, park2022graphens}, different methods have been proposed to solve the class imbalance issue in graph data given {a} specific task.
For {the} node classification task, an early work~\cite{zhou2018sparc} tries to accurately characterize the rare categories through a curriculum self-paced strategy while some other previous works~\cite{ijcai2020-398, zhao2021graphsmote, park2022graphens} solve the {class-imbalanced} issue by proposing different methods to generate {synthetic samples} to rebalance the dataset. Compared to the node-level task, the re-balanced methods specific to graph-level task are relatively unexplored. A recent work~\cite{wang2021imbalanced} proposes to utilize additional supervisory signals from neighboring graphs to alleviate the {class-imbalanced} problem {for a} graph-level task. To the best of our knowledge, our proposed GraphDec is the first work to solve the {class-imbalanced} for both node-level and graph-level tasks.

\vspace{-0.15in}
\section{Preliminary}
\vspace{-0.1in}
In this work, we denote graph as $G = (V, E, X)$, where $V$ is the set of nodes, $E$ is the set of edges, and $X \in \mathbb{R}^{d}$ represents the node (and edge) attributes of dimension $d$. In addition, we represent the neighbor set of node $v \in V$ as $N_{v}$. 

\textbf{Graph Neural Networks.} 
GNNs~\cite{wu2020comprehensive} learn node representations from {the} graph structure and node attributes. This process can be formulated as: 
\begin{equation}
h_{v}^{(l)}=\mathrm{COMBINE}^{(l)}\left(h_{v}^{(l-1)}, \mathrm{AGGREGATE}^{(l)}\left(\left\{h_{u}^{(l-1)}, \forall u \in N_{v}\right\}\right)\right),
\label{eq: GNN}
\end{equation}
where $h_v^{(l)}$ denotes feature of node $v$ at $l$-th GNN layer;  $\mathrm{AGGREGATE}(\cdot)$ and $\mathrm{COMBINE}(\cdot)$ are neighbor aggregation and combination functions, respectively; $h_v^{(0)}$ is initialized with node attribute $X_{v}$. We obtain the output representation of each node after repeating the process in Equation (\ref{eq: GNN}) {for} $L$ rounds.
The representation of the whole graph, denoted as $h_G \in \mathbb{R}^{d}$, can be obtained by using a {READOUT} function to combine the final node {representations} learned above:
\begin{equation}
    h_G = \mathrm{READOUT}\left\{h_{v}^{(L)}~|~\forall v \in V\right\},
\end{equation}
where the $\mathrm{READOUT}$ function can be any permutation invariant, like summation, averaging, etc. 
\\
\textbf{Graph Contrastive Learning.} 
Given a graph dataset $\mathcal{D}=\left\{G_{i} \right\}_{i=1}^{N}$, Graph Contrastive Learning (GCL) methods firstly implement proper transformations on each graph $G_i$ to generate two views $G_i^{\prime}$ and $G_i^{\prime\prime}$.
{The goal of GCL is to map samples within positive pairs closer in the hidden space, while those of the negative pairs are further. GCL methods are usually optimized by contrastive loss.}
Taking the most popular InfoNCE loss~\cite{InfoNCE_LB} as an example, the contrastive loss is defined as: 
\begin{equation}
\mathcal{L}_{CL}(G_i^{\prime}, G_i^{\prime\prime})=-  \log \frac{\exp \left(\operatorname{sim}\left(\mathbf{z}_{i, 1},\mathbf{z}_{i, 2}\right)\right)}{\sum_{j=1, j \neq i}^{N} \exp \left(\operatorname{sim}\left(\mathbf{z}_{i, 1},\mathbf{z}_{j, 2}\right)\right)},
\label{eq: CL}
\end{equation}
where $\mathbf{z}_{i, 1} = f_{\theta}\left(G_i^{\prime}\right)$, $\mathbf{z}_{i,2} = f_{\theta}\left(G_i^{\prime\prime}\right)$, and $\operatorname{sim}$ denotes the similarity function.
\\
\textbf{Network Pruning.} Given {an} over-parameterized deep neural network $f_{\theta}(\cdot)$ with weights ${\theta}$, the network pruning is usually {performed layer-by-layer}. {The} pruning process of the $l_{th}$ layer in $f_{\theta}(\cdot)$ can be formulated as follows: 
\begin{equation}
    \theta_{pruned}^{l_{th}} = \mathrm{TopK}(\theta^{l_{th}}, k), k=\alpha \times {|}\theta^{l_{th}}|,
\label{eq:pruning-model}
\end{equation}
where $\theta^{l_{th}}$ is the parameters in the $l_{th}$ layer of $f_{\theta}(\cdot)$ and 
$\mathrm{TopK}(\cdot, k)$ refers to the operation to choose the top-$k$ largest elements of $\theta^{l_{th}}$. 
We use a pre-defined sparse rate $\alpha$ to control the fraction of parameters kept in the pruned network $\theta_{pruned}^{l_{th}}$. Finally, only the top $k=\alpha \times {|}\theta^{l_{th}}|$ largest weights will be kept in the pruned layer. The pruning process will be implemented iteratively to prune the parameters in each layer of deep neural network~\cite{han2015deep}.

\vspace{-0.15in}
\section{Methodology}
\vspace{-0.1in}
In this section, we first illustrate our sparse subset approximation hypothesis supported {by the theorem, which means that if the gradients} of a data subset approximate well to the {gradients} of the full data set, the model trained on subset performs closely to the model trained with full set. Guided by this hypothesis, we develop GraphDec to continually refine a compact training subset with {the dynamic graph contrastive learning methodology}. In detail, we describe procedures about how to rank the importance of each sample, smooth the refining procedure, and avoid overfitting. 

\vspace{-0.1in}
\subsection{Sparse Subset Approximation Hypothesis} 
\vspace{-0.1in}
\label{sec:Hypothesis}
{Firstly,} we propose the sparse subset approximation hypothesis to show how {a} model trained with a subset data $\mathcal{D}_{S}$ can approximate the effect of {a} model trained with full data $\mathcal{D}$. 
{This hypothesis explains} why the performance of the model trained with a subset data selected by specific methods {(e.g., data diet~\cite{NEURIPS2021_ac56f8fe}) achieves performance close to the one trained on the full dataset.}

\begin{theorem}
\label{thm}

For a data selection algorithm, we assume the model is optimized with full gradient descent. 
{At epoch $t$ where $t \in \left[1, T\right]$}, denote the model's parameters as $\theta^{(t)}$ {where ${\left\Vert\theta^{(t)}\right\Vert}^2 \leq {d}^2$ and} d is constant, the optimal model's parameters as $\theta^*$, subset data as $\mathcal{D}_{S}^{(t)}$, {and} learning rate as $\alpha$. {Define} gradient error  
$\mbox{Err}(\mathcal{D}_{S}^{(t)}, \mathcal{L}, \mathcal{L}_{train}, \theta^{(t)}) = {\left\Vert \sum_{i \in \mathcal{D}_{S}^{(t)}} \nabla_{\theta}\mathcal{L}_{train}^i(\theta^{(t)}) -  \nabla_{\theta}\mathcal{L}(\theta^{(t)})\right\Vert}$, where 
$\mathcal{L}$ denotes training loss $\mathcal{L}_{train}$ over full training data or validation loss $\mathcal{L}_{val}$ over full validation data{.} $\mathcal{L}$ is a convex function. Then we have the following guarantee:

If $\mathcal{L}_{train}$ is Lipschitz continuous with parameter $\sigma_T$ and $\alpha = \frac{d}{\sigma_T \sqrt{T}}$, then $\min_{t = 1:T}
\mathcal{L}(\theta^{(t)}) 
- \mathcal{L}(\theta^*) \leq \frac{d\sigma_T}{\sqrt{T}} + \frac{d}{T}\sum_{t=1}^{T-1} \mbox{Err}(\mathcal{D}_{S}^{(t)}, \mathcal{L}, \mathcal{L}_{train}, \theta^{(t)})$. \\
\end{theorem}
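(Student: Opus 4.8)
The plan is to recognize the statement as a standard convergence guarantee for \emph{inexact} full-batch gradient descent, in which at each epoch the exact full gradient $\nabla_{\theta}\mathcal{L}(\theta^{(t)})$ is replaced by the subset gradient $g^{(t)} = \sum_{i\in\mathcal{D}_S^{(t)}}\nabla_{\theta}\mathcal{L}_{train}^i(\theta^{(t)})$, so the update reads $\theta^{(t+1)} = \theta^{(t)} - \alpha\, g^{(t)}$. The entire argument is driven by tracking the squared distance to the optimum, $D^{(t)} := \left\Vert\theta^{(t)} - \theta^*\right\Vert^2$, and by writing the subset gradient as the true gradient plus an error term, $g^{(t)} = \nabla_{\theta}\mathcal{L}(\theta^{(t)}) + \big(g^{(t)} - \nabla_{\theta}\mathcal{L}(\theta^{(t)})\big)$, where the norm of the second piece is exactly $\mbox{Err}(\mathcal{D}_S^{(t)},\mathcal{L},\mathcal{L}_{train},\theta^{(t)})$.

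First I would expand $D^{(t+1)} = D^{(t)} - 2\alpha\langle g^{(t)},\theta^{(t)}-\theta^*\rangle + \alpha^2\left\Vert g^{(t)}\right\Vert^2$ and split the inner product along the decomposition above. For the exact part, convexity of $\mathcal{L}$ gives $\langle\nabla_{\theta}\mathcal{L}(\theta^{(t)}),\theta^{(t)}-\theta^*\rangle \ge \mathcal{L}(\theta^{(t)}) - \mathcal{L}(\theta^*)$, which is the quantity we ultimately want to bound. For the error part I would apply Cauchy--Schwarz, $\langle g^{(t)}-\nabla_{\theta}\mathcal{L}(\theta^{(t)}),\theta^{(t)}-\theta^*\rangle \ge -\,\mbox{Err}(\cdot)\,\left\Vert\theta^{(t)}-\theta^*\right\Vert$, and then use the domain bound $\left\Vert\theta^{(t)}\right\Vert^2\le d^2$ to control $\left\Vert\theta^{(t)}-\theta^*\right\Vert$ by a multiple of $d$. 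The quadratic term is bounded by $\alpha^2\sigma_T^2$ via the Lipschitz continuity of $\mathcal{L}_{train}$, which caps the subset-gradient norm by $\sigma_T$. Rearranging yields a per-step inequality of the form $2\alpha\big(\mathcal{L}(\theta^{(t)})-\mathcal{L}(\theta^*)\big) \le D^{(t)} - D^{(t+1)} + 2\alpha d\,\mbox{Err}(\cdot) + \alpha^2\sigma_T^2$.

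Next I would sum this inequality over the update steps so that the $D^{(t)}-D^{(t+1)}$ terms telescope to $D^{(1)} - D^{(T)} \le d^2$, drop the nonnegative $D^{(T)}$, and lower-bound the left side using $\min_{t}\big(\mathcal{L}(\theta^{(t)})-\mathcal{L}(\theta^*)\big) \le \frac{1}{T}\sum_t\big(\mathcal{L}(\theta^{(t)})-\mathcal{L}(\theta^*)\big)$. Dividing through by $2\alpha T$ leaves a bound of the shape $\frac{d^2}{2\alpha T} + \frac{d}{T}\sum_t\mbox{Err}(\cdot) + \frac{\alpha\sigma_T^2}{2}$. Substituting the prescribed step size $\alpha = \frac{d}{\sigma_T\sqrt{T}}$ makes the first and third terms each equal $\frac{d\sigma_T}{2\sqrt{T}}$, whose sum is $\frac{d\sigma_T}{\sqrt{T}}$, reproducing the claimed bound with the error sum indexed over the genuine updates.

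The main obstacle I anticipate is the step that converts the abstract domain constraint $\left\Vert\theta^{(t)}\right\Vert^2\le d^2$ into a usable bound on $\left\Vert\theta^{(t)}-\theta^*\right\Vert$ (and on the initial gap $D^{(1)}$); one must either interpret $d$ as the diameter of the parameter domain or assume $\theta^*$ lies in the same ball, since the clean cancellation of constants hinges on $\left\Vert\theta^{(t)}-\theta^*\right\Vert$ being controlled by $d$ rather than $2d$. A secondary point to handle carefully is the bookkeeping between the $T$ iterates $\theta^{(1)},\dots,\theta^{(T)}$ and the $T-1$ updates, which is what makes the error sum run to $T-1$ rather than $T$; since the minimum over $\{1,\dots,T\}$ is no larger than the minimum over $\{1,\dots,T-1\}$, the stated guarantee follows from the per-update sum without loss.
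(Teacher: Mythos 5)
Your proposal is correct and follows essentially the same route as the paper's proof: both run the standard inexact-gradient-descent analysis on $\left\Vert\theta^{(t)}-\theta^*\right\Vert^2$, decompose the update direction into the true gradient plus the error term, apply convexity, Cauchy--Schwarz with $\left\Vert\theta^{(t)}-\theta^*\right\Vert\le d$, the Lipschitz bound on the gradient norm, telescoping, the min-versus-average inequality, and the step size $\alpha=\frac{d}{\sigma_T\sqrt{T}}$. The caveat you raise about converting $\left\Vert\theta^{(t)}\right\Vert^2\le d^2$ into $\left\Vert\theta^{(t)}-\theta^*\right\Vert\le d$ is a real looseness that the paper's proof also glosses over (it simply assumes $\left\Vert\theta-\theta^*\right\Vert\le d$ directly).
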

\vspace{-0.1in}
The detailed proof is provided in the Section A of Appendix. 
According to the above hypothesis, one intuitive illumination is that reducing the distance between gradients {of} the subset and the full set, formulated as ${\left\Vert \sum_{i \in \mathcal{D}_{S}^{(t)}} \nabla_{\theta}\mathcal{L}_{train}^i(\theta^{(t)}) -  \nabla_{\theta}\mathcal{L}(\theta^{(t)})\right\Vert}$, is the key to minimize the gap between {the} performance of the model trained with the subset and the optimal model, denoted as $\mathcal{L}(\theta) - \mathcal{L}(\theta^*)$.
From the perspective of minimizing ${\left\Vert \sum_{i \in \mathcal{D}_{S}^{(t)}} \nabla_{\theta}\mathcal{L}_{train}^i(\theta^{(t)}) - \nabla_{\theta}\mathcal{L}(\theta^{(t)})\right\Vert}$, the success of {data diet~\cite{NEURIPS2021_ac56f8fe}} (a prior coreset algorithm) is understandable: data diet computes each sample's error/gradient norm based on a slight-trained model, {then deletes a portion of {the full set} with smaller values, which can be represented as $\bar{\mathcal{D}_{S}}^{(t)} = {D} - {\mathcal{D}_{S}}^{(t)}$.} 
The gradient $\sum_{j \in \bar{\mathcal{D}_{S}}^{(t)}} \nabla_{\theta}\mathcal{L}_{train}^j(\theta^{(t)})$ of {the} removed data samples is much smaller than that of {the} remaining data samples $\sum_{i \in \mathcal{D}_{S}^{(t)}} \nabla_{\theta}\mathcal{L}_{train}^i(\theta^{(t)})$. 
As we will show in the experiments (Section~\ref{subsec:evo-subset}), the static data diet cannot always {capture} the most important {samples} across all epochs during training~\cite{NEURIPS2021_ac56f8fe}. Although rankings of all elements in $\mathcal{D}_{S}$ seemly keep static and unchangeable, the ranking order of elements in full training dataset $\mathcal{D}$ changes much more actively than {the} diet subset $\mathcal{D}_{S}$, which implies one-shot subset $\mathcal{D}_{S}$ can not provide gradient $\sum_{i \in \mathcal{D}_{S}^{(t)}} \nabla_{\theta}\mathcal{L}_{train}^i(\theta^{(t)})$ to approximate the full set $\mathcal{D}$'s gradient $\nabla_{\theta}\mathcal{L}(\theta^{(t)})$.

\begin{figure*}[th]
    \centering
    {\includegraphics[width=1\textwidth]{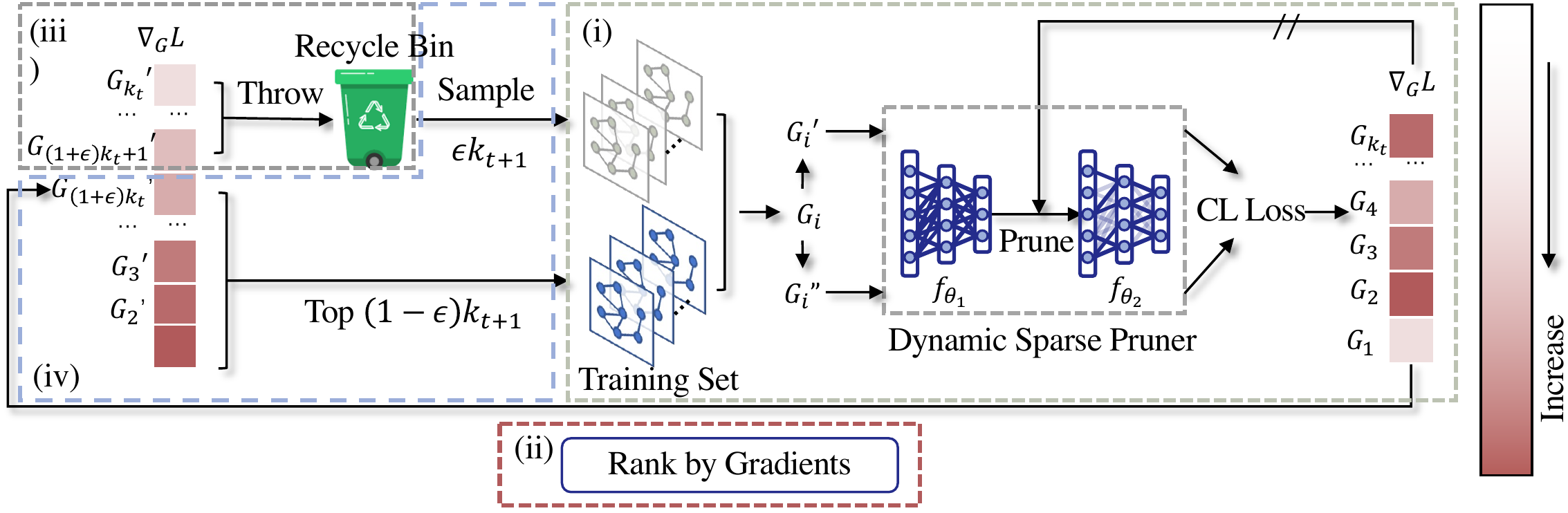}}
    \vspace{-0.2in}
    \caption{The overall framework of GraphDec: 
    (i) The dynamic sparse graph contrastive learning model computes gradients for graph/node samples; 
    (ii) The input samples are sorted according to their gradients;
    (iii) Part of samples with the smallest gradients are thrown into the recycling bin;
    (iv) Part of samples with the largest gradients in the current epoch and some sampled randomly from the recycling bin are jointly used as training input in the next epoch.}
     \vspace{-0.2in}
\label{fig:framework}
\end{figure*}

\vspace{-0.1in}
\subsection{Graph Decantation}
\vspace{-0.1in}
Then, we follow the above Theorem~\ref{thm} to develop GraphDec for achieving competitive performance and efficient data usage simultaneously by {filtering out} the most influential data subset.
The overall framework of GraphDec is illustrated in Figure~\ref{fig:framework}. {The training processes are summarized} into four steps:
(i) {First,} compute gradients of all $M^{(t)}$ graph{/node} samples in $t$-th epoch from contrastive learning loss; 
(ii) The gradient{s are} then normalized and {the} corresponding graph/node samples are ranked in a descending order by {their} magnitudes;
(iii) {We then} decay the number of samples from $M^{(t)}$ to $M^{(t+1)}$ with cosine annealing and only keep the top $M^{(t+1)} \times (1 - \epsilon)$ samples ($\epsilon$ {is the} exploration rate which controls the ratio of samples randomly resampled from recycle bin){. The} rest samples will be {throwed into the} recycle bin temporarily;
(iv) Finally, randomly resample $M^{(t+1)} \times \epsilon$ samples from the recycled bin and {these samples union the ones} selected in step (iii) {will} be used for model training in the $t+1$ epoch.
In the following, we describe each of these four steps {in details.}

\vspace{-0.1in}
\paragraph{Compute gradients by dynamic sparse graph contrastive learning model.}
In the first step, given a graph training set $\mathcal{D}=\left\{G_{i} \right\}_{i=1}^{N}$ as input, our dynamic sparse graph contrastive learning model (DS-GCL) {takes} two augmented views $G^{\prime}$ and $G^{\prime\prime}$ of an original graph $G \in \mathcal{D}$ as {inputs.} In detail, for each graph sample, DS-GCL has two GNN branches $f_{\theta_{1}}(\cdot)$ and  $f_{\theta_{2}}(\cdot)$, which are pruned on-the-fly from an original GCN $f_\theta(\cdot)$ by a dynamic sparse pruner. For example, at $l_{th}$ graph convolutional layer of $f_\theta(\cdot)$, a fraction of connections with the largest weight magnitudes are kept, which are chosen by the following formulation:
\begin{equation}
    \theta_{pruned}^{l_{th}} = \mathrm{TopK}(\theta^{l_{th}}, k), k=\alpha^{(t)} \times {|}\theta^{l_{th}}|,
\label{eq:topk-model}
\end{equation}
where $\alpha^{(t)}$ is the fraction of remaining neural connections, which is controlled under cosine annealing:
\begin{equation}
    \alpha^{(t)} = 
    \frac{\alpha^{(0)}}{2}
    \left\{1 + \cos(\frac{\pi t}{T})\right\}, 
    t \in \left[1, T\right],
\label{eq:cosine-schedule-model}
\end{equation}
where $\alpha^{(0)}$ is initialized as 1. In addition, some new connections are activated using the current gradient information. {Every few epochs}, the pruned neural connections are all re-involved in loss backward by following formulation: 
\begin{equation}
    \mathbb{I}_{\theta^{l_{th}}} = \mathrm{ArgTopK}(\nabla_{\theta^{l_{th}}}\mathcal{L}, k), k=\alpha^{(t)} \times {|}\theta^{l_{th}}|
    ,
\label{eq:topk-model-rewind}
\end{equation}
where $\mathrm{ArgTopK}$ returns indices of top-$k$ elements and $\mathbb{I}_{\theta_{pruned}^{l_{th}}}$ denotes elements' indices in $l_{th}$ layer weights ${\theta^{l_{th}}}$. These reactivated weights are then combined with other remaining connections for model pruning {in the} next iteration.
We save {the} gradient values for all samples and use them in {the} next step.
The benefits brought from DS-GCL {reflects in} two {perspectives}: (a) it scores {the} graph samples without {any labeling} effort {from humans, compared} with graph active learning; (b) it is more sensitive {in selecting} informative samples, empirically verified in the Section C of appendix.

\vspace{-0.1in}
\paragraph{Rank graph samples according to their gradients' $L_{2}$ norms.}
In the second step, since gradients of all graph samples in $\mathcal{D}_{S}^{(t)}$ ($\mathcal{D}_{S}^{(t)} = \mathcal{D}$ when $t$ = 0) at $t$-th epoch are already saved, we can calculate their gradients' $L_{2}$ norms. For example, a graph input ${G}_{i} \in \mathcal{D}_{S}^{(t)}$ will be scored by its gradient norm:
\begin{equation}
    g({x}^{(t)}) = \left\|  \nabla_{f_{{\theta}_{pruned}}} \mathcal{L} (f_{P{\theta}_{pruned}(G^{\prime}}), f_{{\theta}_{pruned}}({G^{\prime\prime}})) \right\|_2.
\label{eq:gradient-norm}
\end{equation}
In this work, we use the popular InfoNCE contrastive loss~\cite{van2018representation} and the gradient of $G$ is computed as: 
\begin{equation}
    \nabla_{f_{{\theta}_{pruned}}} \mathcal{L} (f_{{\theta}_{pruned}}(G^{\prime}), f_{{\theta}_{pruned}(G^{\prime\prime}})) = p({\theta_{pruned}}, G^{\prime}) - p({\theta_{pruned}}, G^{\prime\prime}),
\label{eq:gradient-compute}
\end{equation}
where the $p({\theta_{pruned}}, G^{\prime})$ and $p({\theta_{pruned}}, G^{\prime\prime})$ denote model's predictions of $G^{\prime}$ and $G^{\prime\prime}$ with pruned parameters ${\theta_{pruned}}$. All graph samples in $\mathcal{D}_{S}^{(t)}$ are ranked according to their scores. The ranked $\mathcal{D}_{S}^{(t)}$ will be provided to latter use.

\paragraph{Decay the size of $\mathcal{D}_{S}$ by cosine annealing.} In the third step, we aim to prune the size of the subset for the next $t+1$ training epoch. To smooth this pruning procedure, we apply cosine annealing to control the decay rate. Specifically, the size $M^{(t+1)}$ is computed as follows:
\begin{equation}
    M^{(t+1)} = 
    \frac{M^{(0)}}{2}
    \left\{1 + \cos(\frac{\pi (t+1)}{T})\right\}, t \in \left[1, T\right]
    .
\label{eq:cosine-schedule-data}
\end{equation}

It smoothly refines $\mathcal{D}_{S}$ and avoids manually choosing which training epoch for one-shot selection like {data diet~\cite{NEURIPS2021_ac56f8fe}}. $M^{(t+1)}$ sets the number of graph samples in $\mathcal{D}_{S}^{(t+1)}$ for the next $t+1$ epoch.

As we will show in Figure~\ref{fig:evolution} in the experiments, 
at early training, some graph samples only have {low scores/importance}. However, in {the} later {training} epochs, these graph samples {yield} much higher scores {once given more patience in training}. {Upon} this observation, we {believe} that it {is worthwhile to not permanently remove} samples with low scores at the current training epoch, since some samples in removal set $\bar{\mathcal{D}_{S}} = \mathcal{D} - \mathcal{D}_{S}^{(t)}$ might 
be re-identified as {high-scored} samples if they can be re-involved {into} the training {process.}
In the opposite direction, if {a} model is {only} trained with a subset {of} graph samples {that are highly scored} in the early training {stage, the} training effect {of such a model} cannot approximate the full training set's gradient effects well. 
{Based} on this analysis, this step specializes in this dilemma: we build the above cosine annealing to control the removal rate of $\mathcal{D}_{S}^{(t)}$ during training instead of hastily scoring out a subset in one-shot mode like data diet and then use it to re-train the neural network model.

\vspace{-0.1in}
\paragraph{Recycle removed graph samples for next training epoch.}
In the last step, we already have the ranked $\mathcal{D}_{S}^{(t)}$ and the subset size $M^{(t+1)}$ for $t+1$ epoch. {Our next} aim {is} to update the elements in $\mathcal{D}_{S}^{(t+1)}$ for {the} next epoch. 
When updating elements in $\mathcal{D}_{S}^{(t+1)}$, {since we think currently {low-scored samples may still have the} potential to be high-scored, {removed samples are randomly recovered.}} We use an exploration rate $\epsilon$ to remove $\epsilon M^{(t+1)}$ lowest-scores graph samples in $\mathcal{D}_{S}^{(t)}$ and recycles $\epsilon M^{(t+1)}$ samples from $\bar{\mathcal{D}_{S}}^{(t-1)}$. At the same time, we keep $(1-\epsilon) M^{(t+1)}$ graph samples with highest scores from $\mathcal{D}_{S}^{(t)}$ to $\mathcal{D}_{S}^{(t+1)}$. The overall $\mathcal{D}_{S}^{(t+1)}$'s update is worked as follows:
\begin{equation}
    \mathcal{D}_{S}^{(t+1)} = {\mathrm{TopK}(\mathcal{D}_{S}^{(t)}, (1-\epsilon) M^{(t+1)}) \bigcup \mathrm{SampleK}(\bar{\mathcal{D}_{S}}^{(t-1)}, \epsilon M^{(t+1)})},
\label{eq:topk-data-recycling}
\end{equation}
where $\mathrm{SampleK}(\bar{\mathcal{D}_{S}}^{(t-1)}, \epsilon M^{(t+1)})$ returns randomly sampled $\epsilon M^{(t+1)}$ samples from $\bar{\mathcal{D}_{S}}^{(t-1)}$.
Given the compact sparse subset $\mathcal{D}_{S}^{(t+1)}$, we use it for model training in the next epoch and repeat to execute this pipeline until $T$ epoch.

\vspace{-0.15in}
\section{Experiments}
\vspace{-0.1in}
In this section, we conduct extensive experiments to validate the effectiveness of the proposed model for both {class-imbalanced} graph classification and {class-imbalanced} node classification tasks. We also conduct ablation study and informative subset evolution analysis to better understand the effectiveness of the proposed model. Due to space limit, more analyses about GraphDec property are provided in Section C of Appendix. 
 
 \vspace{-0.1in}
\subsection{Experimental Setup}
\label{sec:exp-set}
\vspace{-0.1in}
\textbf{Datasets}. 
We use various graph benchmark datasets to evaluate our model for two tasks: graph classification and node classification in {class-imbalanced} data scenario.
For the {class-imbalanced} graph classification task, we consider all seven datasets used in G$^2$GNN paper~\cite{wang2021imbalanced}, i.e., MUTAG, PROTEINS, D\&D, NCI1, PTC-MR, DHFR, and REDDIT-B in ~\cite{morris2020tudataset}. For the {class-imbalanced} node classification task, we use all five datasets used in the GraphENS paper \cite{park2022graphens}, i.e., Cora-LT, CiteSeer-LT, PubMed-LT~\cite{sen2008collective}, Amazon-Photo, and Amazon-Computers.
Detailed descriptions of these datasets are provided in the Section B of Appendix. 
\\
\textbf{Baseline Methods}. 
We compare our model with a variety of baseline methods using different rebalance methods. For {class-imbalanced} graph classification, we consider three rebalance methods, i.e., vanilla {(without re-balancing when training)}, up-sampling \cite{wang2021imbalanced}, and re-weight \cite{wang2021imbalanced}.  For each rebalance method, we run three baseline methods including GIN \cite{xu2018how}, InfoGraph \cite{sun2019infograph}, and GraphCL \cite{You2020GraphCL}. In addition, we adopt two versions of G$^2$GNN (i.e., remove-edge and mask-node) \cite{wang2021imbalanced} for in-depth comparison.
For {class-imbalanced} node classification, we consider nine baseline methods including vanilla, re-weight~\cite{japkowicz2002class}, oversampling~\cite{park2022graphens}, cRT~\cite{Kang2020Decoupling}, PC Softmax~\cite{hong2021disentangling}, DR-GCN~\cite{ijcai2020-398}, GraphSMOTE~\cite{zhao2021graphsmote}, and GraphENS~\cite{park2022graphens}.
We use Graph Convolutional Network (GCN) \cite{kipf2017semi} as the default architecture for all rebalance methods.
Further details about the baselines are illustrated in the Section B of Appendix. 
\\
\textbf{Evaluation Metrics}. 
To fully evaluate the model performance, we adopt F1-micro (F1-mi.) and F1-macro (F1-ma.) scores for the {class-imbalanced} graph classification, as well as accuracy (Acc.), balanced accuracy (bAcc.), and F1-macro (F1-ma.) score for the {class-imbalanced} node classification.
\\
\textbf{Experimental Settings}. 
We adopt GCN~\cite{kipf2017semi} as the GNN backbone of GraphDec for both tasks. In particular, we use a two-layers GCN and a one-layer fully-connected layer for node classification, and add one extra average pooling operator as the readout layer for graph classification. 
We follow \cite{wang2021imbalanced} and \cite{park2022graphens} to set the imbalance ratios for graph classification and node classification tasks, respectively. In addition, we use GraphCL~\cite{You2020GraphCL} as the graph contrastive learning framework, and use cosine annealing to dynamically control the sparsity rate in the GNN model and the dataset. 
We set the initial sparsity rate the rate $\alpha^{(0)}$ for model to 0.8 and $\beta^{(0)}$ for dataset to 1.0. 
After the contrastive pre-training, we use the GCN output logits as the input to the Support Vector Machine for fine-tuning.
GraphDec is implemented in PyTorch and trained on NVIDIA V100 GPU. 

\begin{table*}[t]
\caption{{Class-imbalanced} graph classification results.
Numbers after each dataset name indicate imbalance ratios of minority to majority categories. {Best/second-best results are in bold/underline.}}
\vspace{-1mm}
\renewcommand\arraystretch{1}
\centering
\resizebox{1\textwidth}{!}{
\begin{tabular} {l|c|cc|cc|cc|cc|cc}
\toprule
{Rebalance} &\multirow{2}{*}{Basis} & \multicolumn{2}{c|}{MUTAG (5:45)} &\multicolumn{2}{c|}{PROTEINS (30:270)} &\multicolumn{2}{c|}{D\&D (30:270)} &\multicolumn{2}{c|}{NCI1 (100:900)} &\multicolumn{2}{c}{Sparsity (\%)}  \\ 
\cmidrule{3-4} \cmidrule{5-6} \cmidrule{7-8}\cmidrule{9-10} \cmidrule{11-12} Method& & F1-ma. & F1-mi. & F1-ma. & F1-mi. & F1-ma. & F1-mi. & F1-ma. & F1-mi.  & data & model  \\
\midrule
\multirow{3}{*}{vanilla} &GIN~\cite{xu2018how} &52.50 &56.77 &25.33 &28.50 &9.99 &11.88 &18.24 &18.94 & 100 & 100\\
&InfoGraph~\cite{sun2019infograph} & 69.11 & 69.68 & 35.91 & 36.81 & 21.41 & 27.68 & 33.09 & 34.03 & 100 & 100\\
&GraphCL~\cite{You2020GraphCL} & 66.82 & 67.77 & 40.86 & 41.24 & 21.02 & 26.80 & 31.02 & 31.62 & 100 & 100\\
\midrule
\multirow{3}{*}{up-sampling} &GIN~\cite{xu2018how} & 78.03 & 78.77 & 65.64 & 71.55 & 41.15 & 70.56 & 59.19 & 71.80 & >100 & 100\\
&InfoGraph~\cite{sun2019infograph} & 78.62 & 79.09 & 62.68 & 66.02 & 41.55 & 71.34 & 53.38 & 62.20 & >100 & 100\\
&GraphCL~\cite{You2020GraphCL} & 80.06 & 80.45 & 64.21  & 65.76 & 38.96 & 64.23 & 49.92 & 58.29 & >100 & 100\\
\midrule
\multirow{3}{*}{re-weight} &GIN~\cite{xu2018how} & 77.00 & 77.68 & 54.54 & 55.77 & 28.49 & 40.79 & 36.84 & 39.19 & 100 & 100\\
&InfoGraph~\cite{sun2019infograph} & {80.85} & {81.68} & 65.73 & 69.60 & 41.92 & 72.43 & 53.05 & 62.45 & 100 & 100\\
&GraphCL~\cite{You2020GraphCL} & 80.20 & 80.84 & 63.46 & 64.97 & 40.29 & 67.96 & 50.05 & 58.18 & 100 & 100\\
\midrule
\multirow{2}{*}{G$^2$GNN~\cite{wang2021imbalanced}} &remove edge & 80.37 & 81.25 & {\underline{67.70}} & {73.10} & {43.25} & {\underline{77.03}} & {63.60} & {72.97} & 100 & 100\\
&mask node & {\underline{83.01}} & {\underline{83.59}} & {67.39} & {\underline{73.30}} & {\underline{43.93}} & {\textbf{79.03}} & {\underline{64.78}} & {\underline{74.91}} & 100 & 100\\
\midrule
GraphDec &dynamic sparsity &\textbf{85.71} &\textbf{85.71} &\textbf{76.92} &\textbf{76.89} &\textbf{77.97} &\underline{77.02} &\textbf{76.30} &\textbf{76.29} & 50 & 50\\
\bottomrule
\end{tabular}}

\resizebox{1\textwidth}{!}{
\begin{tabular} {l|c|cc|cc|cc|cc|cc}
\toprule
{Rebalance} &\multirow{2}{*}{Basis} & \multicolumn{2}{c|}{{PTC-MR} (9:81)} &\multicolumn{2}{c|}{{DHFR} (12:108)} &\multicolumn{2}{c|}{{REDDIT-B} (50:450)} &\multicolumn{2}{c|}{{Avg. Rank}} &\multicolumn{2}{c}{Sparsity (\%)}  \\ 
\cmidrule{3-4} \cmidrule{5-6} \cmidrule{7-8}\cmidrule{9-10} \cmidrule{11-12} Method& & F1-ma. & F1-mi. & F1-ma. & F1-mi. & F1-ma. & F1-mi. & F1-ma. & F1-mi.  & data & model  \\
\midrule
\multirow{3}{*}{vanilla} &GIN~\cite{xu2018how} & 17.74 & 20.30 & 35.96 & 49.46 & 33.19 & 36.02 & 12.00 & 12.00 & 100 & 100\\
&InfoGraph~\cite{sun2019infograph} & 25.85 & 26.71 & 50.62 & 56.28 & 57.67 & 67.10 & 11.00 & 11.14 & 100 & 100\\
&GraphCL~\cite{You2020GraphCL} & 24.22 & 25.16 & 50.55 & 56.31 & 53.40 & 62.19 & 10.71 & 10.57 & 100 & 100\\
\midrule
\multirow{3}{*}{up-sampling} &GIN~\cite{xu2018how} & 44.78 & 55.43 & 55.96 & 59.39 & 66.71 & 83.00 & 6.00 & 5.43 & >100 & 100\\
&InfoGraph~\cite{sun2019infograph} & 44.29 & 48.91 & 59.49 & 61.62 & 67.01 & 78.68 & 6.00 & 6.00 & >100 & 100\\
&GraphCL~\cite{You2020GraphCL} & 45.12 & 53.50 & 60.29 & 61.71 & 62.01 & 75.84 & 6.29 & 6.43 & >100 & 100\\
\midrule
\multirow{3}{*}{re-weight} &GIN~\cite{xu2018how} & 36.96 & 43.09 & 55.16 & 57.78 & 45.17 & 51.92 & 9.86 & 9.86 & 100 & 100\\
&InfoGraph~\cite{sun2019infograph} & 44.09 & 49.17 & 58.67 & 60.24 & 65.79 & 77.35 & {5.43} & 5.29 & 100 & 100\\
&GraphCL~\cite{You2020GraphCL} & 44.75 & 52.22 & {60.87} & {61.93} & 62.79 & 76.15 & 6.00 & 6.29 & 100 & 100\\
\midrule
\multirow{2}{*}{G$^2$GNN~\cite{wang2021imbalanced}} &remove edge & {46.40}  & {56.61} & {\underline{61.63}} & {\underline{63.61}} & {\underline{68.39}} & {\underline{86.35}} & {2.71} & {2.86}  & 100 & 100\\
&mask node & {\underline{46.61}} & {\underline{56.70}} & 59.72 & 61.27 & {67.52} & {85.43} & {2.71}  & {2.71} & 100 & 100\\
\midrule
GraphDec &dynamic sparsity &\textbf{54.03} &\textbf{61.17} &\textbf{64.25} &\textbf{67.91} &\textbf{69.70} &\textbf{87.00} &1.00 &1.14 & 50 & 50\\
\bottomrule
\end{tabular}}
\vspace{-0.25in}
\label{tab:graph-cls-1}
\end{table*}

\vspace{-0.1in}
\subsection{{Class-imbalanced} Graph Classification Performance}
\vspace{-0.1in}
\label{sec:imbalanced_graph_classification}
We start by comparing GraphDec with the aforementioned baselines on {class-imbalanced} graph classification task. The results are reported in Table \ref{tab:graph-cls-1}. The best and second-best values are highlighted by bold and underline.
From the table, we find that GraphDec outperforms baseline methods on both metrics across different datasets, while only uses an average of 50\% data and 50\% model weights per round. Although a slight F1-micro difference has been detected on D\&D when comparing GraphDec to the best baseline G$^2$GNN, this is understandable due to the fact that the graphs in D\&D are significantly larger than those in other datasets, necessitating specialized designs for graph augmentations (e.g., the average graph size in terms of node number is 284.32 for D\&D, but 39.02 and 17.93 for PROTEINS and MUTAG, respectively).
However, in the same dataset, G$^2$GNN can only achieve 43.93 on F1-macro while GraphDec achieves 77.97, which complements the 2\% difference on F1-micro and further demonstrates GraphDec's ability to learn effectively even on the dataset with large graphs.
Specifically, models trained with vanilla setting perform the worst due to the ignorance of class imbalance. Up-sampling strategy improves the performance, but it introduces additional unnecessary data usage by sampling the minorities multiple times.
Similarly, re-weight strategy tries to address the {class-imbalanced} issue by assigning different weights to different samples. However, it still requires the label data to calculate the weight and thus may not perform well when labels are missing.
G$^2$GNN, as the best baseline, obtains decent performance by considering the usage of rich supervisory signals from both globally and locally neighboring graphs. Finally, the proposed model, GraphDec, achieves the best performance with the ability to capture dynamic sparsity on both GNN model and graph datasets. 
In addition, we rank the performance of GraphDec with regard to baseline methods on each dataset. GraphDec ranks 1.00 and 1.14 on average, which further demonstrates the superiority of GraphDec.
Noticed that all existing methods utilize the entire datasets and the model weights. However, GraphDec uses only half of the data and weights to achieve superior performance.

\vspace{-0.1in}
\subsection{{Class-imbalanced} Node Classification Performance}
\vspace{-0.1in}

To demonstrate the effectiveness of GraphDec in handling {class-imbalanced} node data, we further evaluate GraphDec in the task of {class-imbalanced} node classification.
We first evaluate GraphDec on three long-tailed citation networks (i.e., Cora-LT, CiteSeer-LT, PubMed-LT) and report the results on Table \ref{tab:node-cls-1}. We find that GraphDec obtains the best performance compared to baseline methods with different metrics. 
Specifically,
GraphSmote and GraphENS achieve satisfactory performance by generating virtual nodes to enrich the information involved in the representations of minority category.
However, GraphDec does not rely on synthetic virtual nodes to learn balanced representations, thereby avoiding the unnecessary learning costs on additional data.
Similarly to the {class-imbalanced} graph classification task in Section \ref{sec:imbalanced_graph_classification}, GraphDec leverages only half of the data and model weights, but achieves state-of-the-art performance, whereas all baselines require the full dataset and model weights but perform worse.
To validate the efficacy of the proposed model on  the real-world data, we also evaluate GraphDec on naturally class-imbalanced benchmark datasets (i.e., Amazon-Photo and Amazon-Computers).
We can see that GraphDec yields the best performance on both datasets, which further demonstrates the effectiveness of our model in handling node imbalance.

\begin{table*}[t]
\caption{{Class-imbalanced} node classification results. { Best/second-best results are in bold/underline.}
}
\vspace{-1mm}
\renewcommand\arraystretch{1}
\centering
\resizebox{1\textwidth}{!}{
\begin{tabular} {l|ccc|ccc|ccc|cc|cc|cc}
\toprule
\multirow{2}{*}{Method}  &\multicolumn{3}{c|}{Cora-LT} &\multicolumn{3}{c|}{CiteSeer-LT} & \multicolumn{3}{c|}{PubMed-LT} & \multicolumn{2}{c|}{A.P. ($\rho=$82)} &\multicolumn{2}{c|}{A.C. ($\rho=$244)} &\multicolumn{2}{c}{Sparsity (\%)}  \\ 
\cmidrule{2-16} & Acc. & bAcc. & F1-ma. & Acc. & bAcc. & F1-ma. & Acc. & bAcc. & F1-ma. & (b)Acc. & F1-ma. & (b)Acc. & F1-ma.  & data & model  \\
\midrule
vanilla      &73.66 &62.72 &63.70 &53.90 &47.32 &43.00 &70.76 &57.56 &51.88 &82.86  &78.72  &68.47  &64.01 &100 &100 \\
Re-Weight~\cite{park2022graphens}    &75.20  &68.79  &69.27  &62.56 &55.80  &53.74  &77.44  &72.80  &73.66  &92.94  &92.95  &90.04  &90.11 &100 &100\\
Oversampling~\cite{park2022graphens} &77.44  &70.73 &72.40  &62.78  &56.01  &53.99  &76.70  &68.49  &69.50  &92.46  &92.47  &89.79  &89.85 &>100 &100\\
cRT~\cite{Kang2020Decoupling}          &76.54  &69.26  &70.95  &60.60  &54.05  &52.36  &75.10  &67.52  &68.08  &91.24  &91.17  &86.02  &86.00  &100 &100\\
PC Softmax~\cite{hong2021disentangling}   &76.42  &71.30  &71.24  &65.70  &\textbf{61.54}  &\underline{61.49}  &76.92  &\underline{75.82}  &74.19  &93.32  &93.32  &86.59  &86.62 &100 &100\\
DR-GCN~\cite{ijcai2020-398}       &73.90  &64.30  &63.10  &56.18  &49.57  &44.98  &72.38  &58.86  &53.05  & N/A & N/A & N/A & N/A &100 &100\\
GraphSmote~\cite{zhao2021graphsmote}   &76.76  &69.31 &70.21  &62.58  &55.94  &54.09  &75.98  &70.96  &71.85  &92.65  &92.61  &89.31  &89.39 &>100 &100\\
GraphENS~\cite{park2022graphens}     &\underline{77.76}  &\underline{72.94}  &\underline{73.13}  &\textbf{66.92}  &60.19  &58.67  &\underline{78.12}  &74.13  &\underline{74.58}  &\underline{93.82}  &\underline{93.81}  &\underline{91.94}  &\underline{91.94}  &>100 &100\\
\midrule
GraphDec &\textbf{78.29}  &\textbf{73.94}  &\textbf{74.25}  &\textbf{66.90}  &\textbf{61.56}  &\textbf{61.85}  &\textbf{78.20}  &\textbf{76.05}  &\textbf{76.32}  &\textbf{93.85}  &\textbf{94.02}  &\textbf{92.19}  &\textbf{92.16}  &50 &50\\
\bottomrule
\end{tabular}}
\label{tab:node-cls-1}
\vspace{-0.2in}
\end{table*}

\begin{table}[t]
\centering
\caption{Ablation study results for both tasks. Four rows of red represent removing four individual components from data sparsity perspective.
Four rows of blue represent removing four individual components from model sparsity perspective.
{Best results are in bold.}
}
\vspace{1mm}
\label{tab:ablation-on-graph-sparsity}
\setlength{\tabcolsep}{1.5mm}{
\resizebox{1\textwidth}{!}{
\begin{tabular}
{lccccccc|ccccc}
\toprule
& \multicolumn{7}{c|}{{Class-imbalanced} Graph Classification {(F1-ma.)}} & \multicolumn{5}{c}{{Class-imbalanced} Node Classification {(Acc.)}} \\
\cmidrule{2-13}
Variant &  {MUTAG} & {PROTEINS} & {D\&D} & {NCI1} &  {{PTC-MR}} & {{DHFR}} & {{REDDIT-B}} &{Cora-LT} &{CiteSeer-LT} &{PubMed-LT} &A. Photos & A. Computer\\
\midrule
GraphDec &\textbf{85.71} &\textbf{76.92} &\textbf{77.97} &\textbf{76.30} &\textbf{54.03} &64.25 &\textbf{69.70} &\textbf{78.29} &\textbf{66.90} &\textbf{78.20} &\textbf{93.85} &\textbf{92.19} \\
\midrule
\rowcolor{red!17}w/o GS  &80.10 &72.49 &63.16 &72.83 &48.48 &48.57 &61.40 &68.96 &60.33 &56.22 &73.22 &67.84 \\
\rowcolor{red!17}w/o SS  &80.95 &72.44 &72.26 &73.85 &52.96 &63.99 &70.61 &77.15 &64.67 &76.15 &79.09 &91.33 \\
\rowcolor{red!17}w/o CAD &78.41 &65.79 &68.33 &71.05 &52.13 &50.00 &67.15 &74.87 &62.62 &75.35 &90.71 &83.23 \\
\rowcolor{red!17}w/o RS &83.21 &67.21 &70.75 &71.08 &39.29 &60.99 &67.61 &73.27 &61.32 &72.02 &87.11 &90.38 \\
\midrule
\rowcolor{blue!17}w/o RM &44.37 &38.41 &65.30 &34.39 &32.14 &43.75 &64.82 &70.97 &54.58 &70.16 &79.01 &65.38 \\
\rowcolor{blue!17}w/o SG  &82.63 &74.54 &75.75 &70.13 &39.29 &62.44 &69.16 &77.54 &67.43 &72.43 &91.25 &90.05 \\
\rowcolor{blue!17}w/o CAG &83.50 &61.02 &69.23 &72.83 &41.18 &62.41 &64.14 &75.78 &63.43 &73.07 &92.77 &87.40 \\
\rowcolor{blue!17}w/o RW  &79.25 &65.54 &67.37 &72.99 &45.90 &61.53 &63.16 &76.46 &65.36 &75.54 &90.54 &89.10 \\
\midrule
w/o S.S.         &80.07 &71.90 &73.79 &69.72 &45.58 &\textbf{64.56} &65.67 &74.82 &65.28 &74.00 &86.14 &86.40 \\
\bottomrule
\end{tabular}}}
\vspace{-0.25in}
\end{table}

\vspace{-0.1in}
\subsection{Ablation Study}
\vspace{-0.05in}
Since GraphDec is a unified learning framework composed of multiple components (steps) and explores dynamic sparsity training from both model and dataset perspectives, we conduct ablation study to evaluate the performance of different model variants. 
{Specifically, GraphDec contains four components to address data sparsity and imbalance, including pruning samples by ranking gradients (GS), training with sparse dataset (SS), using cosine annealing to reduce dataset size (CAD), and recycling removed samples (RS),
and another four components to address model sparsity and data imbalance, including pruning weights by ranking magnitudes (RM), using sparse GNN (SG), using cosine annealing to progressively reduce sparse GNN's size (CAG), and reactivate removed weights (RW).
}
In addition, GraphDec employs self-supervision to calculate the gradient score. The details of model variants are provided in the Section B of Appendix.
We analyze the contributions of different components by removing each of them independently. 
We conduct experiments for both tasks to comprehensively inspect each component. The results are shown in Table \ref{tab:ablation-on-graph-sparsity}.

From the table, we find that the performance drops after removing any component, which demonstrates the effectiveness of each component in enhancing the model performance. In general, 
both mechanisms for addressing data and model sparsity
contribute significantly to the overall performance, demonstrating the necessity of these two mechanisms in solving sparsity problem.
Self-supervision is also essential, contributing similarly to dynamic sparsity mechanisms. Besides, it enables us to identify informative data samples without human labels and capture graph knowledge in a self-supervised manner.
In the dataset dynamic sparsity mechanism, GS and CAD contribute the most as sparse GNN's discriminability identifies hidden dynamic sparse subsets from the entire dataset accurately and efficiently.
{Regarding the model dynamic sparsity mechanism, removing RM and SG 
lead to a significant performance drop, which demonstrates that they are the key components in training the dynamic sparse GNN from the full GNN model. }
In particular, CAG enables the performance stability after the model pruning and helps capture information samples during decantation by assigning greater gradient norm.
Among these variants, the full model GraphDec achieves the best result in most cases. This demonstrates the effectiveness of dataset dynamic sparsity mechanism, model dynamic sparsity mechanism, and self-supervision strategy in our model.

\begin{figure*}[t]
    \centering
    {\includegraphics[width=1.0\textwidth]{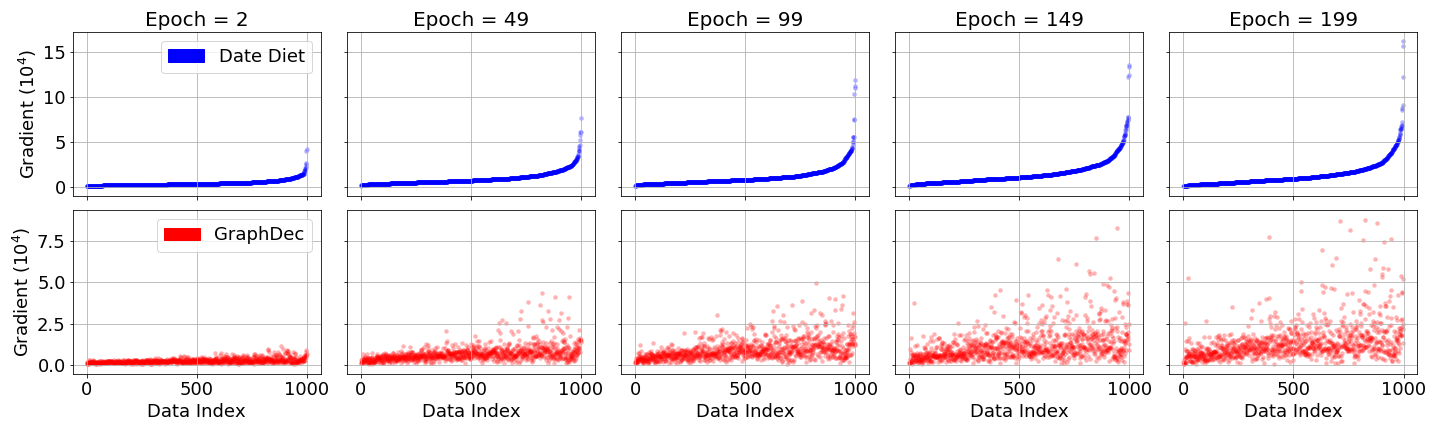}}
    \vspace{-0.25in}
    \caption{Evolution of data samples' gradients computed by {data diet~\cite{NEURIPS2021_ac56f8fe}} (upper figures) and our GraphDec (lower figures) on NCI1 data.}
     \vspace{-0.25in}
\label{fig:evolution}
\end{figure*}

\vspace{-0.15in}
\subsection{Analyzing Evolution of Sparse Subset by Scoring All Samples}
\vspace{-0.1in}
\label{subsec:evo-subset}
To show GraphDec's capability in dynamically identifying informative samples, we show the visualization of sparse subset evolution of data diet and GraphDec on {class-imbalanced} NCI1 dataset in Figure \ref{fig:evolution}.
Specifically, we compute 1000 graph samples with their importance scores. These samples are then ranked according to their scores and marked with sample indexes.
From the upper figures in Figure \ref{fig:evolution}, we find that data diet is unable to accurately identify the dynamic informative nodes. Once a data sample has been removed from the training list due to the low score, the model forever disregards it as unimportant. However, the fact that a sample is currently unimportant does not imply that it will remain unimportant indefinitely. Especially when the model cannot detect the true importance of each sample in the early stage, it may lead to the premature elimination of vital nodes.
Similarly, if a data sample is considered as important at early epochs (i.e., marked with higher sample index), it cannot be removed during subsequent epochs.
Therefore, we observe that data diet can only increase the scores of samples within the high index range (i.e., 500–1000), while ignoring samples within the low index range (i.e., <500). 
However, GraphDec (Figure \ref{fig:evolution} (bottom)) can capture the dynamic importance of each sample regardless of the initial importance score.
We see that samples with different indexes all have the opportunity to be considered important and therefore be included in the training list. Correspondingly, GraphDec takes into account a broader range of data samples when shrinking the training list, meanwhile maintaining flexibility towards the previous importance scores.

\vspace{-0.1in}
\section{Conclusion}
\vspace{-0.1in}
\label{sec:conclusion}
In this paper, to address the graph data imbalance challenge, we propose an efficient and effective method named \textbf{Graph} \textbf{Dec}antation (GraphDec). GraphDec leverages a dynamic sparse graph contrastive learning model to dynamically identified a sparse-but-informative subset for model training
, in which the sparse GNN encoder is dynamically sampled from a dense GNN, and its {capability of identifying informative samples} is used to rank and update the training data in each epoch. 
Extensive experiments demonstrate that GraphDec outperforms state-of-the-art baseline methods for both node classification and graph classification tasks in the {class-imbalanced} scenario. The analysis of the sparse informative samples' evolution further explains the superiority of GraphDec catching the informative subset in different training periods effectively.


\clearpage
\bibliographystyle{plain}
\clearpage
\appendix
\section{Proof of Theorem~\ref{app-thm}}
\label{app-proof}
\begin{theorem}
\label{app-thm}
For a data selection algorithm~\cite{killamsetty2021grad, NEURIPS2021_ac56f8fe}, we assume model training is optimized with full gradient descent. 
At $t \in \left[1, T\right]$ epoch, we denote the model's parameter as $\theta^{(t)}$ (satisfying ${\left\Vert\theta^{(t)}\right\Vert}^2 \leq {d}^2$, d is constant), the optimal model's parameter as $\theta^*$, subset data as $\mathcal{D}_{S}^{(t)}$, learning rate as $\alpha$. We also introduce the gradient error term as 
$\mbox{Err}(\mathcal{D}_{S}^{(t)}, \mathcal{L}, \mathcal{L}_{train}, \theta^{(t)}) = {\left\Vert \sum_{i \in \mathcal{D}_{S}^{(t)}} \nabla_{\theta}\mathcal{L}_{train}^i(\theta^{(t)}) -  \nabla_{\theta}\mathcal{L}(\theta^{(t)})\right\Vert}$, where 
$\mathcal{L}$ denotes training loss $\mathcal{L}_{train}$ over full training data or validation loss $\mathcal{L}_{val}$ over full validation data and $\mathcal{L}$ is a convex function. Then we have following guarantee:

If $\mathcal{L}_{train}$ is Lipschitz continuous with parameter $\sigma_T$ and $\alpha = \frac{d}{\sigma_T \sqrt{T}}$, then $\min_{t = 1:T} \mathcal{L}(\theta^{(t)}) - \mathcal{L}(\theta^*) \leq \frac{d\sigma_T}{\sqrt{T}} + \frac{d}{T}\sum_{t=1}^{T-1} \mbox{Err}(\mathcal{D}_{S}^{(t)}, \mathcal{L}, \mathcal{L}_{train}, \theta^{(t)})$. \\
\end{theorem}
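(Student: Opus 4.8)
The plan is to adapt the classical subgradient-descent regret analysis to the setting where each update uses the subset gradient $g_S^{(t)} := \sum_{i \in \mathcal{D}_{S}^{(t)}} \nabla_{\theta}\mathcal{L}_{train}^i(\theta^{(t)})$ in place of the exact full gradient $\nabla_{\theta}\mathcal{L}(\theta^{(t)})$. Writing the full gradient descent step as $\theta^{(t+1)} = \theta^{(t)} - \alpha\, g_S^{(t)}$, I would first expand the squared distance to the optimum,
\[
\|\theta^{(t+1)} - \theta^*\|^2 = \|\theta^{(t)} - \theta^*\|^2 - 2\alpha \langle g_S^{(t)}, \theta^{(t)} - \theta^*\rangle + \alpha^2 \|g_S^{(t)}\|^2,
\]
and solve for the inner product $\langle g_S^{(t)}, \theta^{(t)} - \theta^*\rangle$ to express it via the one-step decrease in distance plus the squared gradient term.

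The key step is to connect this inner product to the suboptimality gap. By convexity of $\mathcal{L}$ we have $\mathcal{L}(\theta^{(t)}) - \mathcal{L}(\theta^*) \leq \langle \nabla_{\theta}\mathcal{L}(\theta^{(t)}), \theta^{(t)} - \theta^*\rangle$. I would then split the true gradient into the subset gradient plus the discrepancy, $\nabla_{\theta}\mathcal{L}(\theta^{(t)}) = g_S^{(t)} - \bigl(g_S^{(t)} - \nabla_{\theta}\mathcal{L}(\theta^{(t)})\bigr)$, and apply Cauchy–Schwarz to the discrepancy term. Its norm is exactly $\mbox{Err}(\mathcal{D}_{S}^{(t)}, \mathcal{L}, \mathcal{L}_{train}, \theta^{(t)})$, so this contributes $\mbox{Err}(\cdots)\,\|\theta^{(t)} - \theta^*\|$, which I bound through the diameter assumption by $d\cdot\mbox{Err}(\cdots)$.

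Next I would sum the resulting per-epoch inequality over $t$. The distance terms telescope, leaving only $\tfrac{1}{2\alpha}\|\theta^{(1)} - \theta^*\|^2 \leq \tfrac{d^2}{2\alpha}$; the squared-gradient terms are each bounded by $\sigma_T^2$ using the $\sigma_T$-Lipschitz continuity of $\mathcal{L}_{train}$ (which controls the subgradient norm); and the error terms accumulate as $d\sum_{t}\mbox{Err}(\cdots)$. Substituting the prescribed step size $\alpha = d/(\sigma_T\sqrt{T})$ balances the first two contributions to $d\sigma_T\sqrt{T}$. Dividing by $T$ and using that the minimum over epochs is at most the average yields the claimed bound $\min_{t=1:T}\mathcal{L}(\theta^{(t)}) - \mathcal{L}(\theta^*) \leq \tfrac{d\sigma_T}{\sqrt{T}} + \tfrac{d}{T}\sum_{t}\mbox{Err}(\cdots)$.

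The main obstacle I anticipate is handling the \emph{bias} introduced by $g_S^{(t)}$: unlike stochastic gradient methods, where sampling noise is zero-mean and cancels in expectation, here the gradient discrepancy is a deterministic error that cannot be averaged away, so it must be carried through Cauchy–Schwarz and retained as an additive penalty in the final rate. Some care is also needed to justify the uniform diameter bound $\|\theta^{(t)} - \theta^*\| \leq d$ used at every epoch, since the hypothesis as stated bounds $\|\theta^{(t)}\|$ rather than the distance to $\theta^*$ directly; a corresponding bound on $\theta^*$ (or a redefinition of $d$ as the iterate--optimum diameter) is what makes the constants come out cleanly.
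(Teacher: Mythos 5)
Your proposal follows essentially the same route as the paper's own proof: expand $\|\theta^{(t+1)}-\theta^*\|^2$ under the subset-gradient update, use convexity to relate the suboptimality gap to $\langle\nabla\mathcal{L}(\theta^{(t)}),\theta^{(t)}-\theta^*\rangle$, split off the gradient discrepancy via Cauchy--Schwarz to get the $d\cdot\mbox{Err}(\cdot)$ penalty, telescope, bound the squared-gradient terms by $\sigma_T^2$, take min $\leq$ average, and balance with $\alpha = d/(\sigma_T\sqrt{T})$. Your closing remark about the diameter bound is well taken --- the paper silently substitutes $\|\theta^{(t)}-\theta^*\|\leq d$ for the stated hypothesis $\|\theta^{(t)}\|^2\leq d^2$, exactly the gap you identify.
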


\begin{proof}

The gradients of $\mathcal{L}_{val}$ and $\mathcal{L}_{train}$ are supposed to be $\sigma$-bounded by $\sigma_V$ and $\sigma_T$ respectively. 
According to gradient descent, we have:
\begin{equation}
\begin{aligned}
{\nabla_{\theta}\mathcal{L}_{train}(\theta^{(t)})}^\mathrm{T}(\theta^{(t)} - \theta^*) = \frac{1}{\alpha^{(t)}}{(\theta^{(t)} - \theta^{(t+1)})}^\mathrm{T}(\theta^{(t)} - \theta^*),
\end{aligned}
\end{equation}

\begin{equation}
\begin{aligned}
{\nabla_{\theta}\mathcal{L}_{train}(\theta^{(t)})}^\mathrm{T}(\theta^{(t)} - \theta^*) = \frac{1}{2\alpha^{(t)}}\left({\left\Vert\theta^{(t)} - \theta^{(t+1)}\right\Vert}^2 + {\left\Vert\theta_{t} - \theta^{*}\right\Vert}^2 - {\left\Vert\theta^{(t+1)} - \theta^{*}\right\Vert}^2\right).
\end{aligned}
\end{equation}

Since one update step $\theta^{(t)} - \theta^{(t+1)}$ can be optimized by gradient multiplying with learning rate $\alpha^{(t)} \nabla_{\theta}\mathcal{L}_{train}(\theta^{(t)})$, we have:

\begin{equation}
\begin{aligned}
\label{eq-24}
{\nabla_{\theta}\mathcal{L}_{train}(\theta^{(t)})}^\mathrm{T}(\theta^{(t)} - \theta^*) = \frac{1}{2\alpha^{(t)}}\left({\left\Vert \alpha^{(t)} \nabla_{\theta}\mathcal{L}_{train}(\theta^{(t)})\right\Vert}^2 + {\left\Vert\theta_{t} - \theta^{*}\right\Vert}^2 - {\left\Vert\theta^{(t+1)} - \theta^{*}\right\Vert}^2 \right).
\end{aligned}
\end{equation}

Since ${\nabla_{\theta}\mathcal{L}_{train}(\theta^{(t)})}^\mathrm{T}(\theta^{(t)} - \theta^*)$ can be represented as follows:
\begin{equation}
\begin{aligned}
\label{eq-25}
{\nabla_{\theta}\mathcal{L}_{train}(\theta^{(t)})}^\mathrm{T}(\theta^{(t)} - \theta^*) = {\nabla_{\theta}\mathcal{L}_{train}(\theta^{(t)})}^\mathrm{T}(\theta^{(t)} - \theta^*)\\ -{\nabla_{\theta}\mathcal{L}(\theta^{(t)})}^\mathrm{T}(\theta^{(t)} - \theta^*) + {\nabla_{\theta}\mathcal{L}(\theta^{(t)})}^\mathrm{T}(\theta^{(t)} - \theta^*),
\end{aligned}
\end{equation}

then based on the combination of the Equation~\eqref{eq-24} and Equation~\eqref{eq-25}, we have:
\begin{equation}
\begin{aligned}
{\nabla_{\theta}\mathcal{L}_{train}(\theta^{(t)})}^\mathrm{T}(\theta^{(t)} - \theta^*) -{\nabla_{\theta}\mathcal{L}(\theta^{(t)})}^\mathrm{T}(\theta^{(t)} - \theta^*) + {\nabla_{\theta}\mathcal{L}(\theta^{(t)})}^\mathrm{T}(\theta^{(t)} - \theta^*) =\\ \frac{1}{2\alpha^{(t)}}\left({\left\Vert \alpha^{(t)} \nabla_{\theta}\mathcal{L}_{train}(\theta^{(t)})\right\Vert}^2 + {\left\Vert\theta_{t} - \theta^{*}\right\Vert}^2 - {\left\Vert\theta^{(t+1)} - \theta^{*}\right\Vert}^2 \right)
\end{aligned}
\end{equation}

\begin{equation}
\begin{aligned}
\label{gd-cndtn}
{\nabla_{\theta}\mathcal{L}(\theta^{(t)})}^\mathrm{T}(\theta^{(t)} - \theta^*) = \frac{1}{2\alpha^{(t)}}\left({\left\Vert \alpha^{(t)} \nabla_{\theta}\mathcal{L}_{train}(\theta^{(t)})\right\Vert}^2 + {\left\Vert\theta_{t} - \theta^{*}\right\Vert}^2 - {\left\Vert\theta^{(t+1)} - \theta^{*}\right\Vert}^2 \right) \\ - {\left(\nabla_{\theta}\mathcal{L}_{train}(\theta^{(t)}) - \nabla_{\theta}\mathcal{L}(\theta^{(t)})\right)}^\mathrm{T}(\theta^{(t)} - \theta^*).
\end{aligned}
\end{equation}

We assume learning rate $\alpha^{(t)}, t \in [0, T-1]$ is a constant value, then we have:
\begin{equation}
\begin{aligned}
\sum_{t=0}^{T-1}{\nabla_{\theta}\mathcal{L}(\theta^{(t)})}^\mathrm{T}(\theta^{(t)} - \theta^*) = & \frac{1}{2\alpha}{\left\Vert\theta_{0} - \theta^{*}\right\Vert}^2 - {\left\Vert\theta_{T} - \theta^{*}\right\Vert}^2 + \sum_{t=0}^{T-1}(\frac{1}{2\alpha}{\left\Vert \alpha \nabla_{\theta}\mathcal{L}_{train}(\theta^{(t)})\right\Vert}^2) \nonumber\\ 
&+ \sum_{t=0}^{T-1}\left({\left(\nabla_{\theta}\mathcal{L}_{train}(\theta^{(t)}) -  \nabla_{\theta}\mathcal{L}(\theta^{(t)})\right)}^\mathrm{T}(\theta^{(t)} - \theta^*) \right).
\end{aligned}
\end{equation}

Since we assume ${\left\Vert\theta_{T} - \theta^{*}\right\Vert}^2 \geq 0$, then we have:
\begin{equation}
\begin{aligned}
\sum_{t=0}^{T-1}{\nabla_{\theta}\mathcal{L}(\theta^{(t)})}^\mathrm{T}(\theta^{(t)} - \theta^*) \leq \frac{1}{2\alpha}{\left\Vert\theta_{0} - \theta^{*}\right\Vert}^2 + \sum_{t=0}^{T-1}(\frac{1}{2\alpha}{\left\Vert \alpha \nabla_{\theta}\mathcal{L}_{train}(\theta^{(t)})\right\Vert}^2) \\+ \sum_{t=0}^{T-1}\left({\left(\nabla_{\theta}\mathcal{L}_{train}(\theta^{(t)}) -  \nabla_{\theta}\mathcal{L}(\theta^{(t)})\right)}^\mathrm{T}(\theta^{(t)} - \theta^*) \right).
\label{gd-equation}
\end{aligned}
\end{equation}

We assume $\mathcal{L}$ is convex and $\mathcal{L}_{train}$ is lipschitz continuous with parameter $\sigma_T$. Then for convex function $\mathcal{L}(\theta)$, we have $\mathcal{L}(\theta^{(t)}) - \mathcal{L}(\theta^*) \leq {\nabla_{\theta}\mathcal{L}(\theta^{(t)})}^\mathrm{T}(\theta^{(t)} - \theta^*)$. By combining this result with Equation~\ref{gd-equation}, we get:

\begin{equation}
\begin{aligned}
\sum_{t=0}^{T-1} \mathcal{L}(\theta^{(t)}) - \mathcal{L}(\theta^*) \leq  \frac{1}{2\alpha}{\left\Vert\theta_{0} - \theta^{*}\right\Vert}^2 + \sum_{t=0}^{T-1}(\frac{1}{2\alpha}{\left\Vert \alpha \nabla_{\theta}\mathcal{L}_{train}(\theta^{(t)})\right\Vert}^2) \\+ \sum_{t=0}^{T-1}\left({\left(\nabla_{\theta}\mathcal{L}_{train}(\theta^{(t)}) -  \nabla_{\theta}\mathcal{L}(\theta^{(t)})\right)}^\mathrm{T}(\theta^{(t)} - \theta^*) \right).
\end{aligned}
\end{equation}

Since $\left\Vert L_T(\theta)\right\Vert \leq \sigma_T$, ${\left\Vert \alpha \nabla_{\theta}\mathcal{L}_{train}(\theta^{(t)})\right\Vert} \leq \sigma_T$, and we assume $\left\Vert\theta - \theta^{*}\right\Vert \leq d$, then we have:

\begin{equation}
\begin{aligned}
\sum_{t=0}^{T-1} \mathcal{L}(\theta^{(t)}) - \mathcal{L}(\theta^*) \leq \frac{d^2}{2\alpha} + \frac{T \alpha \sigma_T^2}{2} + \sum_{t=0}^{T-1}d\left({\left\Vert\nabla_{\theta}\mathcal{L}_{train}(\theta^{(t)}) -  \nabla_{\theta}\mathcal{L}(\theta^{(t)})\right\Vert} \right),
\end{aligned}
\end{equation}

\begin{equation}
\begin{aligned}
\label{avg_loss}
\frac{1}{T}\sum_{t=0}^{T-1}\mathcal{L}(\theta^{(t)}) - \mathcal{L}(\theta^*) \leq \frac{d^2}{2\alpha T} + \frac{\alpha \sigma_T^2}{2} + \sum_{t=0}^{T-1}\frac{d}{T}\left({\left\Vert\nabla_{\theta}\mathcal{L}_{train}(\theta^{(t)}) - \nabla_{\theta}\mathcal{L}(\theta^{(t)})\right\Vert} \right).
\end{aligned}
\end{equation}

Since $\min{(\mathcal{L}(\theta^{(t)}) - \mathcal{L}(\theta^*))} \leq \frac{1}{T}\sum_{t=0}^{T-1}\mathcal{L}(\theta^{(t)}) - \mathcal{L}(\theta^*)$, based on Equation~\ref{avg_loss}, we have:
\begin{equation}
\begin{aligned}
\min{(\mathcal{L}(\theta^{(t)}) - \mathcal{L}(\theta^*))} \leq \frac{d^2}{2\alpha T} + \frac{\alpha \sigma_T^2}{2} + \sum_{t=0}^{T-1}\frac{d}{T}\left({\left\Vert\nabla_{\theta}\mathcal{L}_{train}(\theta^{(t)}) -  \nabla_{\theta}\mathcal{L}(\theta^{(t)})\right\Vert} \right).
\end{aligned}
\end{equation}

We set learning rate $\alpha = \frac{d}{\sigma_T \sqrt{T}}$ and then have:
\begin{equation}
\begin{aligned}
\label{min_loss}
\min{(\mathcal{L}(\theta^{(t)}) - \mathcal{L}(\theta^*))} \leq \frac{d\sigma_T}{\sqrt{T}} + \sum_{t=0}^{T-1}\frac{d}{T}\left({\left\Vert\nabla_{\theta}\mathcal{L}_{train}(\theta^{(t)})-  \nabla_{\theta}\mathcal{L}(\theta^{(t)})\right\Vert} \right).
\end{aligned}
\end{equation}

\end{proof}

\section{Experimental Details}
\subsection{Datasets Details}

In this work, seven graph classification datasets and five node classification datasets are used to evaluate the effectiveness of our proposed model, we provided their detailed statistics in Table \ref{tab:append-dataset}. 
For graph classification datasets, we follow the imbalance setting of \cite{wang2021imbalanced} to set the train-validation split as 25\%/25\% and change the imbalance ratio from 5:5 (balanced) to 1:9 (imbalanced). The rest of the dataset is used as the test set. The specified imbalance ratio of each dataset is clarified after its name in Table~\ref{tab:appendix-graph-cls-1}.
For node classification datasets, we follow~\cite{sen2008collective} to set the imbalance ratio of Cora, CiteSeer and PubMed as 10. Besides, the setting of Amazon-Photo and Amazon-Computers are borrowed from~\cite{park2022graphens}, where the imbalance ratio $\rho$ is set as 82 and 244, respectively.

\begin{table}[t]
    \centering
    \caption{Original dataset details for imbalanced graph classification and imbalanced node classification tasks.}
        \label{tab:append-dataset}
        \setlength{\tabcolsep}{1mm}{
        \scalebox{1.0}{
        \begin{tabular}{>{\small}c|>{\small}l|>{\small}c|>{\small}c|>{\small}c|>{\small}c|>{\small}c}
        \toprule
        Task & Dataset & \# Graphs & \# Nodes & \# Edges & \# Features & \# Classes \\
        \midrule
        \multirow{7}{*}{Graph} 
        & {MUTAG} & 188 & $\sim$17.93 & $\sim$19.79 & - & 2 \\
        & {PROTEINS} & 1,113 & $\sim$39.06 & $\sim$72.82 & - & 2\\
        & {D\&D} & 1,178 & $\sim$284.32 & $\sim$715.66 & - & 2\\
        & {NCI1} & 4,110 & $\sim$29.87 & $\sim$32.30 & - & 2\\
        & {PTC-MR} & 344 & $\sim$14.29 & $\sim$14.69 & - & 2\\
        & {DHFR} & 756 & $\sim$42.43 & $\sim$44.54 & - & 2\\
        & {REDDIT-B} & 2,000 & $\sim$429.63 & $\sim$497.75 & - & 2\\
        \midrule
        \multirow{5}{*}{Node}
        & {Cora} & - & 2,485 & 5,069 & 1,433 & 7\\
        & {Citeseer} & - & 2,110 & 3,668 & 3,703 & 6\\
        & {Pubmed} & - & 19,717 & 44,324 & 500 & 3\\
        & {A-photo} & - & 7,650 & 238,162 & 745 & 8\\
        & {A-computers} & - & 13,381 & 245,778 & 767 & 10\\
        \bottomrule
    \end{tabular}
    }
    }
        \vspace{-0.1in}
\end{table}

\subsection{Baseline Details}
We compare our model with a variety of baseline methods using different rebalance methods:

I. For \textbf{imbalanced graph classification}~\cite{wang2021imbalanced}, four models are included as baselines in our work, we list these baselines as follow:

(1) \textbf{GIN}~\cite{xu2018how}, a popular supervised GNN backbone for graph tasks due to its powerful expressiveness on graph structure;

(2) \textbf{InfoGraph}~\cite{sun2019infograph}, an unsupervised graph learning framework by maximizing the mutual information between the whole graph and its local topology of different levels;

(3) \textbf{GraphCL}~\cite{You2020GraphCL}, learning unsupervised graph representations via maximizing the mutual information between the original graph and corresponding augmented views;

(4) \textbf{G$^2$GNN}~\cite{wang2021imbalanced}, a re-balanced GNN proposed to utilize additional supervisory signals from both neighboring graphs and graphs themselves to alleviate the imbalance issue of graph.

II. For \textbf{imbalanced node classification}, we consider nine baseline methods in our work, including 

(1) \textbf{vanilla}, denoting that we train GCN normally without any extra rebalancing tricks;

(2) \textbf{re-weight}~\cite{japkowicz2002class}, denoting we use cost-sensitive loss and re-weight the penalty of nodes in different classes;

(3) \textbf{oversampling}~\cite{park2022graphens}, denoting that we sample nodes of each class to make the data's number of each class reach the maximum number of corresponding class's data;

(4) \textbf{cRT}~\cite{Kang2020Decoupling}, a post-hoc correction method for decoupling output representations;

(5) \textbf{PC Softmax}~\cite{hong2021disentangling}, a post-hoc correction method for decoupling output representations, too;

(6) \textbf{DR-GCN}~\cite{ijcai2020-398}, building virtual minority nodes and forces their features to be close to the neighbors of a source minority node;

(7) \textbf{GraphSMOTE}~\cite{zhao2021graphsmote}, a pre-processing method that focuses on the input data and investigates the possibility of re-creating new nodes with minority features to balance the training data.

(8) \textbf{GraphENS}~\cite{park2022graphens}, proposing a new augmentation method to construct an ego network from all nodes for learning minority representation. 

We use Graph Convolutional Network (GCN) \cite{kipf2017semi} as the default architecture for all rebalance methods.
\subsection{Details of GraphDec Variants}
The details of model variants are provided as follows:

I. Specifically, GraphDec contains four components to address data sparsity and imbalance: 
(1) \textbf{GS} is sampling informative subset data according to ranking gradients; 
(2) \textbf{SS} is training model with the sparse dataset, correspondingly; 
(3) \textbf{CAD} is using cosine annealing to reduce dataset size; 
(4) \textbf{RS} is recycling removed samples, correspondingly. 
To investigate their corresponding effectiveness, we remove them correspondingly as:

(1) \textbf{w/o} GS is that we randomly sample subset from the full set;

(2) \textbf{w/o} SS is that we train GNN with the full set;

(3) \textbf{w/o} CAD is that we directly reduce dataset size to target dataset size and it is same as data diet;

(4) \textbf{w/o} RS is not recycling any removed samples.

II. Another four components to address model sparsity and data imbalance:
(1) \textbf{RM} samples model weights according to ranking magnitudes;
(2) \textbf{SG} is using sparse GNN, correspondingly;
(3) \textbf{CAG} is using cosine annealing to progressively reduce sparse GNN's size;
(4) \textbf{RW} is reactivating removed weights.
To investigate their effectiveness, we remove them correspondingly as:

(1) \textbf{w/o RM} is that we randomly sample activated weights from full GNN model;

(2) \textbf{w/o SG} is that we train full GNN during forward and backward;

(3) \textbf{w/o CAG} is that we directly reduce the model size to target sparsity rate;

(4) \textbf{w/o RW} is not reactivating any removed weights during sparse training.

\subsection{Full Results with Error Bars}
\begin{table*}[t]
\caption{Imbalanced graph classification results.
The numbers after each dataset name indicate the imbalance ratios of minority to majority categories. We report the macro F1-score and micro F1-score with the standard errors as Results are reported as $mean\pm std$ for 3 repetitions on each dataset. We bold the best performance.
}
\renewcommand\arraystretch{1}
\centering
\resizebox{1\textwidth}{!}{
\begin{tabular}{l|c|cc|cc|cc|cc}
\toprule
{Rebalance} &\multirow{2}{*}{Basis} & \multicolumn{2}{c|}{MUTAG (5:45)} &\multicolumn{2}{c|}{PROTEINS (30:270)} &\multicolumn{2}{c|}{D\&D (30:270)} &\multicolumn{2}{c}{NCI1 (100:900)} \\
\cmidrule{3-4} \cmidrule{5-6} \cmidrule{7-8}\cmidrule{9-10}  Method& & F1-ma. & F1-mi. & F1-ma. & F1-mi. & F1-ma. & F1-mi. & F1-ma. & F1-mi.  \\
\midrule
\multirow{3}{*}{vanilla} &GIN~\cite{xu2018how} &$52.50\pm 18.70$ & $56.77\pm 14.14$ & $25.33\pm 7.53$ & $ 28.50\pm 5.82$ & $9.99\pm 7.44$ & $11.88\pm 9.49$ & $18.24\pm 7.58$ & $18.94 \pm 7.12$ \\
&InfoGraph~\cite{sun2019infograph} &$69.11\pm 9.03$ & $69.68\pm 7.77$ & $35.91\pm 7.58$ & $36.81\pm 6.51$ & $21.41\pm 4.51$ & $27.68\pm 7.52$ & $33.09\pm 3.30$ & $34.03\pm 3.68$\\
&GraphCL~\cite{You2020GraphCL} & $66.82\pm 11.56$ & $67.77\pm 9.78$ & $40.86\pm 6.94$  &  $41.24\pm 6.38$& $21.02\pm 3.05$ & $26.80\pm 4.95$ & $31.02\pm 2.69$ & $31.62\pm 3.05$ \\
\midrule
\multirow{3}{*}{up-sampling} &GIN~\cite{xu2018how} & $78.03\pm 7.62$ & $78.77\pm 7.67$ & $65.64\pm 2.67$ & $71.55\pm 3.19$ & $41.15\pm 3.74$ & $70.56\pm 10.28$ & $59.19\pm 4.39$ & $71.80\pm 7.02$ \\
&InfoGraph~\cite{sun2019infograph} & $78.62\pm 6.84$ & $79.09\pm 6.86$ & $62.68\pm 2.70$ & $66.02\pm 3.18$ & $41.55\pm 2.32$ & $71.34\pm 6.76$ & $53.38\pm 1.88$ & $62.20\pm 2.63$ \\
&GraphCL~\cite{You2020GraphCL} & $80.06\pm 7.79$ & $80.45\pm 7.86$ &$64.21\pm 2.53$  & $65.76\pm 2.61$ & $38.96\pm 3.01$ & $64.23\pm 8.10$ & $49.92\pm 2.15$ & $58.29\pm 3.30$ \\
\midrule
\multirow{3}{*}{re-weight} &GIN~\cite{xu2018how} & $77.00\pm 9.59$ & $77.68\pm 9.30$ & $54.54\pm 6.29$ & $55.77\pm 7.11$ & $28.49\pm 5.92$ & $40.79\pm 11.84$ & $36.84\pm 8.46$ & $39.19\pm 10.05$ \\
&InfoGraph~\cite{sun2019infograph} & $80.85\pm 7.75$ & $81.68\pm 7.83$ & $65.73\pm 3.10$& $69.60\pm 3.68$ & $41.92\pm 2.28$ & $72.43\pm 6.63$ & $53.05\pm 1.12$ & $62.45\pm 1.89$ \\
&GraphCL~\cite{You2020GraphCL} & $80.20\pm 7.27$ & $80.84\pm 7.43$ & $63.46\pm 2.42$ & $64.97\pm 2.41$ & $40.29\pm 3.31$ & $67.96\pm 8.98$ & $50.05\pm 2.09$ & $58.18\pm 3.08$ \\
\midrule
\multirow{2}{*}{G$^2$GNN~\cite{wang2021imbalanced}} &remove edge & $80.37\pm 6.73$ & $81.25\pm 6.87$ & \underline{$67.70\pm 2.96$} & $73.10\pm 4.05$ & $43.25\pm 3.91$ & \underline{$77.03\pm 9.98$} & $63.60\pm 1.57$ & $72.97\pm1.81$ \\
&mask node & \underline{$83.01\pm 7.01$} & \underline{$83.59\pm 7.14$} & $67.39\pm 2.99$ & \underline{$73.30\pm 4.19$} & \underline{$43.93\pm 3.46$} & $79.03\pm 10.78$ & \underline{$64.78\pm 2.86$} & \underline{$74.91\pm 2.14$} \\
\midrule
GraphDec &dynamic sparsity &\textbf{85.71$\pm$10.20} &\textbf{85.71$\pm$11.10} &\textbf{76.92$\pm$6.15} &\textbf{76.89$\pm$6.80} &\textbf{77.97$\pm$6.75} &\underline{77.02$\pm$6.26} &\textbf{76.30$\pm$5.12} &\textbf{76.29$\pm$6.27} \\
\bottomrule
\end{tabular}
}

\resizebox{1\textwidth}{!}{
\begin{tabular} {l|c|cc|cc|cc}
\toprule
{Rebalance} &\multirow{2}{*}{Basis} & \multicolumn{2}{c|}{{PTC-MR} (9:81)} &\multicolumn{2}{c|}{{DHFR} (12:108)} &\multicolumn{2}{c}{{REDDIT-B} (50:450)} \\ 
\cmidrule{3-4} \cmidrule{5-6} \cmidrule{7-8} Method& & F1-ma. & F1-mi. & F1-ma. & F1-mi. & F1-ma. & F1-mi. \\
\midrule
\multirow{3}{*}{vanilla} &GIN~\cite{xu2018how} &$17.74\pm 6.49$ & $20.30\pm 6.06$ & $35.96\pm 8.87$ & $49.46\pm 4.90$ & $33.19\pm 14.26$ & $36.02\pm 17.38$ \\
&InfoGraph~\cite{sun2019infograph} & $25.85\pm 6.14$ & $26.71\pm 6.50$ & $50.62\pm 8.33$ & $56.28\pm 4.58$ & $57.67\pm 3.80$ & $67.10\pm 4.91$ \\
&GraphCL~\cite{You2020GraphCL} &$24.22\pm 6.21$ & $25.16\pm 5.25$ & $50.55\pm 10.01$ & $56.31\pm 6.12$ & $53.40\pm 4.06$ & $62.19\pm 5.68$  \\
\midrule
\multirow{3}{*}{up-sampling} &GIN~\cite{xu2018how} & $44.78\pm 8.01$ & $55.43\pm 14.25$ & $55.96\pm 10.06$ & $59.39\pm 6.52$ & $66.71\pm 3.92$ & $83.00\pm 5.18$\\
&InfoGraph~\cite{sun2019infograph} & $44.29\pm 4.69$ & $48.91\pm 7.49$ & $59.49\pm 5.20$ & $61.62\pm 4.18$ & $67.01\pm 3.34$ & $78.68\pm 3.71$ \\
&GraphCL~\cite{You2020GraphCL} & $45.12\pm 7.33$ & $53.50\pm 13.31$ & $60.29\pm 9.04$ & $61.71\pm 6.75$ & $62.01\pm 3.97$ & $75.84\pm 3.98$\\
\midrule
\multirow{3}{*}{re-weight} &GIN~\cite{xu2018how} & $36.96\pm 14.08$ & $43.09\pm 20.01$ & $55.16\pm 9.47$ & $57.78\pm 6.69$ & $45.17\pm 8.46$ & $51.92\pm 12.29$  \\
&InfoGraph~\cite{sun2019infograph} & $44.09\pm 5.62$ & $49.17\pm 8.78$  & $58.67\pm 5.82$ & $60.24\pm 4.80$ & $65.79\pm 3.38$ & $77.35\pm 3.96$ \\
&GraphCL~\cite{You2020GraphCL} & $44.75\pm 7.62$ & $52.22\pm 13.24$ & $60.87\pm 6.33$ & $61.93\pm 5.15$ & $62.79\pm 6.93$ & $76.15\pm 9.15$\\
\midrule
\multirow{2}{*}{G$^2$GNN~\cite{wang2021imbalanced}} &remove edge & \underline{$46.40\pm 7.73$}  & \underline{$56.61\pm13.72$} & \underline{$61.63\pm 10.02$} & \underline{$63.61\pm 6.05$} & \underline{$68.39\pm 2.97$} & \underline{$86.35\pm 2.27$} \\
&mask node & $46.61\pm 8.27$ & $56.70\pm 14.81$ & $59.72\pm 6.83$ & $61.27\pm 5.40$ & $67.52\pm 2.60$ & $85.43\pm 1.80$ \\
\midrule
GraphDec &dynamic sparsity &\textbf{54.03$\pm$8.22} &\textbf{61.17$\pm$10.24} &\textbf{64.25$\pm$9.54} &\textbf{67.91$\pm$7.10} &\textbf{69.70$\pm$7.20} &\textbf{87.00$\pm$9.36} \\
\bottomrule
\end{tabular}}
\label{tab:appendix-graph-cls-1}
\end{table*}

\begin{table*}[t]
\caption{Imbalanced node classification results. We report the accuracy, balanced accuracy and macro F1-score with the standard errors as $mean\pm std$ for 3 repetitions on each dataset. We bold the best performance.
}
\renewcommand\arraystretch{1}
\centering
\resizebox{1\textwidth}{!}{
\begin{tabular} {l|ccc|ccc|ccc|cc|cc}
\toprule
\multirow{2}{*}{Method}  &\multicolumn{3}{c|}{Cora-LT} &\multicolumn{3}{c|}{CiteSeer-LT} & \multicolumn{3}{c|}{PubMed-LT} & \multicolumn{2}{c|}{A.P. ($\rho=$82)} &\multicolumn{2}{c}{A.C. ($\rho=$244)}  \\ 
\cmidrule{2-14} & Acc. & bAcc. & F1-ma. & Acc. & bAcc. & F1-ma. & Acc. & bAcc. & F1-ma. & (b)Acc. & F1-ma. & (b)Acc. & F1-ma. \\
\midrule
vanilla      &73.66$\pm$0.28 &62.72$\pm$0.39 &63.70$\pm$0.43 &53.90$\pm$0.70 &47.32$\pm$0.61 &43.00$\pm$0.70 &70.76$\pm$0.74 &57.56$\pm$0.59 &51.88$\pm$0.53 &82.86$\pm$0.30  &78.72$\pm$0.52  &68.47$\pm$2.19  &64.01$\pm$3.18  \\
Re-Weight~\cite{park2022graphens}    &75.20$\pm$0.19 &68.79$\pm$0.18 &69.27$\pm$0.26 &62.56$\pm$0.32 &55.80$\pm$0.28 &53.74$\pm$0.28 &77.44$\pm$0.21 &72.80$\pm$0.38 &73.66$\pm$0.27  &92.94$\pm$0.13  &92.95$\pm$0.13  &90.04$\pm$0.29  &90.11$\pm$0.28  \\
Oversampling~\cite{park2022graphens} &77.44$\pm$0.09 &70.73$\pm$0.10 &72.40$\pm$0.11 &62.78$\pm$0.37 &56.01$\pm$0.35 &53.99$\pm$0.37 &76.70$\pm$0.48 &68.49$\pm$0.28 &69.50$\pm$0.38  &92.46$\pm$0.47   &92.47$\pm$0.48   &89.79$\pm$0.16   &89.85$\pm$0.17  \\
cRT~\cite{Kang2020Decoupling}          &76.54$\pm$0.22 &69.26$\pm$0.48 &70.95$\pm$0.50 &60.60$\pm$0.25 &54.05$\pm$0.22 &52.36$\pm$0.22 &75.10$\pm$0.23 &67.52$\pm$0.72 &68.08$\pm$0.85  &91.24$\pm$0.28   &91.17$\pm$0.29   &86.02$\pm$0.55   &86.00$\pm$0.56   \\
PC Softmax~\cite{hong2021disentangling}   &76.42$\pm$0.34 &71.30$\pm$0.45 &71.24$\pm$0.52 &65.70$\pm$0.42 &61.54$\pm$0.45 &61.49$\pm$0.49 &76.92$\pm$0.26 &75.82$\pm$0.25 &74.19$\pm$0.25  &93.32$\pm$0.25   &93.32$\pm$0.25   &86.59$\pm$0.92   &86.62$\pm$0.91   \\
DR-GCN~\cite{ijcai2020-398}       &73.90$\pm$0.29 &64.30$\pm$0.39 &63.10$\pm$0.57 &56.18$\pm$1.10 &49.57$\pm$1.08 &44.98$\pm$1.29 &72.38$\pm$0.19 &58.86$\pm$0.15 &53.05$\pm$0.13   & N/A & N/A & N/A & N/A \\
GraphSmote~\cite{zhao2021graphsmote}   &76.76$\pm$0.31 &69.31$\pm$0.37 &70.21$\pm$0.64 &62.58$\pm$0.30 &55.94$\pm$0.34 &54.09$\pm$0.37 &75.98$\pm$0.22 &70.96$\pm$0.36 &71.85$\pm$0.32  &92.65$\pm$0.31  &92.61$\pm$0.32  &89.31$\pm$0.34   &89.39$\pm$0.35   \\
GraphENS~\cite{park2022graphens}     &\underline{77.76$\pm$0.09} &\underline{72.94$\pm$0.15} &\underline{73.13$\pm$0.11} &66.92$\pm$0.21 &60.19$\pm$0.21 &58.67$\pm$0.25 &\underline{78.12$\pm$0.06} &74.13$\pm$0.22 &\underline{74.58$\pm$0.13}  &\underline{93.82$\pm$0.13 }  &\underline{93.81$\pm$0.12}  &\underline{91.94$\pm$0.17}  &\underline{91.94$\pm$0.17}  \\
\midrule
GraphDec &\textbf{78.29$\pm$0.40} &\textbf{73.94$\pm$0.67} &\textbf{74.25$\pm$0.83} &\textbf{66.90$\pm$0.65} &\textbf{61.56$\pm$0.72} &\textbf{61.85$\pm$0.96} &\textbf{78.20$\pm$0.45} &\textbf{76.05$\pm$0.66} &\textbf{76.32$\pm$0.66}  &\textbf{93.85$\pm$0.72 }  &\textbf{94.02$\pm$0.67 }  &\textbf{92.19$\pm$0.73 }  &\textbf{92.16$\pm$0.75 } \\
\bottomrule
\end{tabular}}
\label{tab:appen-node-cls-1}
\end{table*}
We provide the F1-macro and F1-micro scores along with their standard deviation for our model and other baselines across both graph classification and node classification tasks in Table~\ref{tab:appendix-graph-cls-1} and Table~\ref{tab:appen-node-cls-1}. We report their results as $mean\pm std$ for 3 repetitions on each metric for each dataset.

\section{Finding Informative Samples by Sparse GNN}
\begin{figure}[t]
            \centering
    {\includegraphics[scale=0.4]{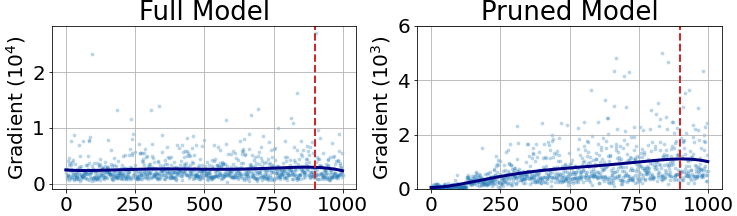}
}
    \caption{Results of data samples’ gradients computed by full GNN model and our dynamic sparse GNN model on NCI1 data. Red dashed line: on the left side, points on the x-axis [0, 900] are majority class; on the right side, points on the x-axis [900, 1000] are minority class.}
\label{fig:1x5}
\end{figure}
Compared with the full GNN model, our dynamic sparse GNN model is more sensitive to recognizing informative data samples which can be empirically verified by Figure~\ref{fig:1x5}. 
As we can see in the figure, our dynamical pruned model can assign larger gradients for minority-class samples than majority-class samples during contrastive training, while the full model generally assigns relatively uniform gradients for both minority-class and majority-class samples. Thus, the proposed dynamically pruned model demonstrates its discriminatory ability on minority-class and can thereby sample more minority-class data according to computed unsupervised gradients.

{
\section{Computational Cost}
To evaluate the proposed GraphDec's computational cost on a wide range of datasets, results in Table~\ref{tab:append-compute-cost} that include three different class-imbalanced node classification datasets (PubMed-LT, Cora-LT, CiteSeer-LT), three different class-imbalanced graph classification datasets (MUTAG, PROTEINS, PTC\_MR), and four baselines (vanilla GCN, re-weight, re(/over)-sample, GraphCL). We run 200 epochs for each method to measure their computational time (second) for training.
On NVIDIA GeForce RTX 3090 GPU device, we get the running time, as reported in Table~\ref{tab:append-compute-cost}. All models are implemented in PyTorch Geometric~\cite{Fey/Lenssen/2019}.
\begin{table}[t]
    \centering
    \caption{Computational time (second) comparisons.}
        \label{tab:append-compute-cost}
        \setlength{\tabcolsep}{1mm}{
        \scalebox{1.0}{
        \begin{tabular}{>{\small}c|>{\small}l|>{\small}c|>{\small}l|>{\small}c|>{\small}l|>{\small}c|>{\small}l}
        \toprule
        Model & Method & PubMed-LT & Cora-LT  & CiteSeer-LT  & PROTEINS   & PTC\_MR    & MUTAG   \\
        \midrule
        \multirow{5}{*}{GCN} 
        & {vanilla}       &   2.436   &    2.154     &     2.129 &      12.798 &   4.295 &     2.989\\
        & re-weight &          2.330 &       2.282 &           2.150 &        12.903 &       4.410 &    3.125\\
        & re(/over)-sample &          3.241 &       2.860 &           2.794 &        15.996 &       5.734 &    4.022\\
        & {GraphCL} & 3.747 &3.412           &3.399         &14.981        &5.049    &3.215\\
        & {GraphDec} & 2.243 &1.995          &1.952        &10.614       &4.212      &2.090\\
        \bottomrule
    \end{tabular}
    }
    }
\end{table}
According to the results, our GraphDec has less computation cost than prior methods. 
The following explains why augmentation doubles the input graph without increasing overall computation costs: (i) The augmentations we use (e.g, node dropping and edge dropping) reduces the size of input graphs (i.e., node number decreases 25\%, edge number decreases 25-35\%); (ii) During each epoch, our GraphDec prunes datasets so that only approximately 50\% of the training data is used. (iii) our GraphDec prunes GNN model weight, resulting in a lighter model during training. (iv) Despite the fact that augmentation doubles the number of input graphs, the additional new views only consume forward computational resources without requiring a backward step or weight update step, thereby only marginally increasing computation.
}
\end{document}